\DeclareMathOperator*{\argmax}{arg\,max} 
\DeclareMathOperator*{\argmin}{arg\,min} 
\newtheorem{theo}{Theorem}[section]
\newtheorem{prop}[theo]{Proposition}
\newtheorem{lem}[theo]{Lemma}
\newtheorem{rem}[theo]{Remark}
\newtheorem{defi}{{Definition}}[section]
\newtheorem{ass}[theo]{Assumption}
\DeclareMathOperator*{\sign}{sign}
\tikzset{
  treenode/.style = {align=center, inner sep=0pt, text centered,
    font=\sffamily},
  arn_n/.style = {treenode, circle, white, font=\sffamily\bfseries, draw=black,
    fill=black, text width=1.8em},
  arn_g/.style = {treenode, circle, black, font=\sffamily\bfseries, draw=green,
    fill=green, text width=1.8em},
  arn_r/.style = {treenode, circle, black, font=\sffamily\bfseries, draw=black,
    fill=red, text width=1.8em},
  arn_x/.style = {treenode, rectangle, draw=black,
    minimum width=0.5em, minimum height=0.5em}
}
\begin{document}
\title{Nonparametric active learning for cost-sensitive classification.}
\date{\today}
\author{
Boris Ndjia Njike, Xavier Siebert\\
Mathematics and Operational Research,
University of Mons, Belgium\\
}

\maketitle

\begin{abstract}%
  Cost-sensitive learning is a common type of machine learning problem where different errors of prediction incur different costs. In this paper, we design a generic nonparametric active learning algorithm for cost-sensitive classification. Based on the construction of confidence bounds for the expected prediction cost functions of each label, our algorithm sequentially selects the most informative vector points. Then it interacts with them by only querying the costs of prediction that could be the smallest. We prove that our algorithm attains optimal rate of convergence in terms of the number of interactions with the feature vector space. Furthermore, in terms of a general version of Tsybakov's noise assumption, the gain over the corresponding passive learning is explicitly characterized by the probability-mass of the boundary decision. 
  Additionally, we prove the near-optimality of obtained upper bounds by providing  matching (up to logarithmic factor) lower bounds.
\end{abstract}
\hspace{0.80cm}
\textbf{Keywords:}
Cost-sensitive classification, active learning, nonparametric classification.
  
\section{Introduction}
\label{sec:intro}
For many real-world machine learning tasks, while unlabelled data are abundant, getting a pool of labelled data is very expensive and time-consuming. In this case, it is interesting to only label the points that could significantly affect the prediction decision. This is the main purpose of \textit{active learning} \citep{cohn1994improving,dasgupta2011two} which is a machine learning approach that attemps to provide an optimal decision rule while using as few labelled data as possible. Contrary to standard models of machine learning (namely \textit{passive learning}) where the labelled data are provided beforehand, in \textit{active learning}, the learner only has access to a set of unlabelled points at the beginning, and  has to progressively request (to a so-called \textit{oracle}) at some cost the label of some points during the learning process. At the end, based on these selected labelled points, a prediction decision rule is provided. In this paper, we consider the classification task which consists in providing a prediction rule or classifier that maps each unlabelled point $x$ from $\mathcal{X}$, the \textit{instance space}, to  an element $y$ $\in$ $\mathcal{Y}=\{1,\ldots,M\}$ the \textit{label space} $($where $M\geq 2$ is an integer$)$. Both in active and in passive learning, the performance of a prediction rule is measured in terms of its ability to predict the label of a new instance by keeping the error of classification as small as possible. However, while fruitful in many domains applications, measuring the performance of a classifier by only considering its ability to maintain the error of classification as small as possible is sometimes inappropriate because some type of classification errors (even small) could have significant negative effects on the underlying problem. For many real world domain applications, it is thus valuable to consider learning classification approaches that pay attention to each error of classification. For example, in fraud detection tasks, a classification error that deems a fraudulent transaction as legitimate may incur huge losses for the banking institution, such as financial or reputational, while refusing a legitimate transaction is less serious because the banking institution only loses the transaction fee.     
The learning classification problem can be reformulated in terms of the cost incurred by a label prediction, so that a prediction rule will predict the label which has the smallest negative effect on the underlying problem. Such learning problem is called \textit{cost-sensitive classification} \citep{elkan2001foundations,dmochowski2010maximum}. Let us emphazise that the prediction cost differs from the labeling cost mentioned above in the definition of active learning. \textit{Labeling cost} in active learning corresponds to the effort (for example, it could be expressed in terms of time) made to determine the label of an instance while \textit{prediction cost} in cost-sensitive classification refers to the cost incurred by making some prediction as in the fraud detection example.

In this paper, we design an active learning algorithm that provides a classifier suitable for cost-sensitive classification. Our analysis falls into the nonparametric setting under assumptions related to those used in \citep{audibert2007fast, minsker2012plug,locatelli2017adaptivity,kpotufe2022nuances}. At each step, our algorithm is able to select any point from the instance space and to interact with it by requesting the prediction costs that could be the smallest (with high probability). Under some smoothness and margin noise assumptions, we prove that the resulting classifier achieves an optimal rate of convergence in terms of the number of interactions with the instance space. Additionally, we show the (near) optimality of our rate of convergence by providing matching (up to a logarithmic factor) lower bounds. These results compare favorably with those obtained in the passive learning counterpart \citep{reeve2019learning}.
The paper is organized as follows: Section~\ref{sec:relatedWorks} presents backgrounds and related works on active learning, cost-sensitive classification and active learning for cost-sensitive classification. In Section~\ref{sec:contributions} we summarize our main contributions while in Section~\ref{sec:algo} we introduce the main notations, assumptions used in this work, the overall description of our algorithm and the theoretical properties of our algorithm. We conclude with 
Section~\ref{sec:conclusion}. Section \ref{sec:appendix} contains all detailed proofs.
\section{Related works}
\label{sec:relatedWorks}

\textbf{Active learning}. 
Over the last fifteen years, a lot of progress has been made on understanding the principles of active learning, and its benefits over passive learning (see for example \citep{balcan2007margin,dasgupta2011two,castro2008minimax,minsker2012plug,hanneke2015minimax,locatelli2017adaptivity,kpotufe2022nuances}). 
Some of the studies considered the nonparametric setting by focusing on plug-in methods in classification \citep{minsker2012plug,locatelli2017adaptivity}. The overall principle  is to progressively estimate the class-conditional regression functions by only focusing in the region where the classification is most difficult. Under some standard smoothness and noise assumptions, \citep{minsker2012plug, locatelli2017adaptivity} provided a rate of convergence that shows the improvement (over some range of smoothness and noise parameters) of active learning over passive learning \citep{audibert2007fast}. A deep analysis of these bounds and of their the proof details reveals that these improvements over passive learning strongly depend on the behavior of the class-conditional regression functions around the boundary decision. Very recently, \citep{kpotufe2022nuances} went further by introducing a more general noise assumption than that used in \citep{minsker2012plug,locatelli2017adaptivity}. This assumption takes into account the probability-mass of the boundary decision and allows to prove that no improvement is possible over passive learning when there is a large amount of instances exactly on the decision boundary. The overall principle of our algorithm is inspired by \citep{minsker2012plug,locatelli2017adaptivity,shekhar2021active}, and is adapted for the cost-sensitive classification. When interacting with the oracle, the prediction costs are requested instead of the label as was done in \citep{minsker2012plug,locatelli2017adaptivity}. Moreover, we use a more general noise assumption, and depending on how large the probability-mass of the boundary decision is, we explicitly identify when a gain is possible in cost-sensitive active learning over its passive learning counterpart \citep{reeve2019learning}.  

\textbf{Cost-sensitive learning}. As mentioned in Section \ref{sec:intro}, formulating the objective only in terms of the error of classification is inappropriate in some practical situations. Some works have pointed out this disadvantage by considering the cost-sensitive approach to classification \citep{elkan2001foundations,dmochowski2010maximum,reeve2017minimax, reeve2019learning} where the costs incurred by each error of classification are taken into account. Since the pioneering work by \citep{elkan2001foundations}, a plethora of cost-sensitive learning approaches have been proposed, but we consider here three that are closely related to our present setting.
\\
First, \citep{elkan2001foundations} introduced a cost-sensitive approach where an $M\times M$-matrix $\Gamma$ is given to the learner before the learning process. The matrix $\Gamma$ represents a cost matrix, and $\Gamma_{ij}\geq 0$ represents the cost incurred by predicting the label $i$ whereas the true label is  $j$. When $\Gamma_{ij}=1$ if $i\neq j$ and $\Gamma_{ii}=0,$ it reduces to the standard approach where the cost of misclassification is not taken into account. The learning objective is thus to find a prediction rule that minimizes the expected cost with respect to the cost matrix $\Gamma$. This approach was also considered by other works such as \citep{reeve2017minimax, reeve2019learning}. These works were built upon \citep{audibert2007fast}, which considered the standard learning approach by implicitly making the assumption of symmetric prediction costs. 
\\
The second cost-sensitive approach consists in making $\Gamma$  a feature-dependent matrix cost \citep{elkan2001foundations, reeve2019learning}. This makes sense in some real-world applications such as medical diagnosis where the prediction cost may depend on the patient (medical history for instance). 
\\
The third cost-sensitive approach, which will be considered in this paper,  is more general than the two previous ones. It considers  the case where the cost matrix $\Gamma$ $($or $\Gamma(x))$ is not available to the learner. This is very common in practice such as medical diagnosis where it is sometimes very difficult to evaluate the cost that would have been incurred if an unhealthy patient was predicted as healthy. In this case, an instance $X$ does not come directly to the learner with a label $y$, but, it is associated with a (random) cost-vector  of size $M$, where the $i$-th component represents the prediction cost incurred by the label $i$. 

The three aforementioned cost-sensitive approaches were considered in the case of passive learning \citep{reeve2019learning}. In the third approach, they advocated for maximizing rewards rather than minimizing costs as in the present work. In nonparametric passive learning setting, they studied minimax learning rate under some smoothness and noise assumptions which are closely related to ours. Their analysis and results appear to be more general than the work of \citep{audibert2007fast}. In this paper, we study for the first time the cost-sensitive classification problem in nonparametric active learning setting in the same line of research as \citep{audibert2007fast} and compare our statistical guarantees results to the corresponding passive learning setup \citep{reeve2019learning}.  
\\
\textbf{Active learning for cost-sensitive classification}.
Cost-sensitive multi-class classification in active learning has received attention in the last decade mostly in parametric setting \citep{agarwal2013selective,krishnamurthy2017active,JMLR:v20:17-681}. Through the online selective sampling framework, \citep{agarwal2013selective} designed an algorithm for cost-sensitive multiclass-classification and showed improvement of active learning over passive learning. However, they considered learning models where the query strategies are tailored to linear representation \citep{gentile2012multilabel} which is too specific and difficult to extend to other hypothesis classes. Additionally, they used the cost-sensitive setting where the cost-matrix is fixed and available to the learner before the learning process. Later, \citep{krishnamurthy2017active} considered active learning for cost-sensitive multi-class classification in parametric setting and where the prediction costs are unknown. For every considered example $X$, their algorithm computes upper and lower bounds of the expected prediction cost of each label and then only queries labels with relatively large prediction range. However, their computational techniques leads to sub-optimal results in label complexity. This latter drawback was overcome in \citep{JMLR:v20:17-681}, where an efficient cost range computation is provided and leads to optimal results in terms of label complexity. In this paper, we rather consider the nonparametric setting. We derive new techniques  for computing with high confidence, the upper and lower bounds on the expected prediction costs. This allows us to make optimal predictions and then provide a classifier suitable for cost-sensitive classification that achieves optimal rate of convergence.   

\section{Contributions}
\label{sec:contributions}
In this present work, we propose a generic nonparametric active learning algorithm for cost-sensitive multiclass-classification by considering a more general noise assumption than that used in earlier works such as \citep{reeve2019learning}. Our contributions are as follows:
\begin{itemize}
    \item Our resulting classifier achieves an optimal rate of convergence which involves a noise parameter that allows to precisely determine when gain is possible over the passive learning counterpart \citep{reeve2019learning}. Moreover, our algorithm is adaptive to this noise parameter.
    \item We provide a lower bound that matches (up to a logarithmic factor) the obtained rate of convergence achieved by our algorithm. 
    \end{itemize}
    \section{Algorithm}
\label{sec:algo}
In this Section, we first introduce some notations, then we provide a brief description and analysis of our active learning algorithm. For clarity purposes, some key quantities that will be used in the analysis of our algorithm are summarized in Table \ref{table:1}. We conclude by giving some theoretical results that show that our active learning algorithm enjoys good statistical guarantees. 
\subsection{Setting}
\label{sec:setting}
Let $\mathcal{X}\subset \mathbb{R}^d$ be a metric space, referred to as the \textit{instance space}. We assume that $\mathcal{X}=[0,1]^d$ is equipped with the Euclidean distance $\|.\|.$ Let $\mathcal{Y}=\{1,\ldots,M\}$ be the set of \textit{labels}.  Let $P$ be a (unknown) probability defined on $\mathcal{X}\times [0,1]^M.$ In this case, if $(X,c)\sim P$, with $c=(c(1),\ldots,c(M))$, then for $y=1,\ldots,M$, the quantity $c(y)$ represents the prediction cost induced by the use of label $y$. In cost-sensitive learning, instead of dealing with a labelled sample from $\mathcal{X}\times\mathcal{Y}$, the learner observes an i.i.d sample $(X_1,c_1), \ldots, (X_n,c_n)$ generated according to $P.$ The random quantity $c_k$ $(k=1,\ldots,n)$ represents the cost vector $c_k=(c_k(1),\ldots,c_k(M))$. In this context, the main objective is to construct a mapping  $g:\mathcal{X}\longrightarrow \mathcal{Y}$ which minimizes the cost-sensitive expected cost:
$$R_{cs}(g)=\mathbb{E}_{(X,c)}(c(g(X))).$$ 

Let $y \in \mathcal{Y}$ be a given label, and the function: $f(.;y): \mathcal{X}\longrightarrow [0,1]$ such that $f(x;y)=\mathbb{E}(c(y)\vert X=x)$.
The oracle mapping $f_{cs}^*:\mathcal{X}\longrightarrow \mathcal{Y}$ defined by 
$$ f_{cs}^{*}(x)\in\argmin_{y\in \mathcal{Y}}\;f(x;y)$$ and known as the \textit{Bayes cost-sensitive classifier}, 
achieves the minimum cost-sensitive expected cost  \citep{JMLR:v20:17-681}. The excess expected cost for a mapping $g: \mathcal{X}\longrightarrow \mathcal{Y}$ is: 
$$\varepsilon_{cs}(g)=R_{cs}(g)-R_{cs}(f^*_{cs})=\mathbb{E}_{X}\left(f(X;g(X))-f(X;f_{cs}^*(X))\right).$$
\begin{rem}
Cost-sensitive learning is more complex than standard classification problem. In fact, in cost-sensitive learning, labels just serve as indices for the cost vector, and the distribution of data is over pairs of $(x, c)$ rather than $(x, y)$ as seen in (active) binary and multiclass classification \citep{kpotufe2022nuances}. Moreover, cost-sensitive decision-making involves minimizing the expected cost of a decision, which may involve predicting a label that is less probable (with respect to the standard classification) but has a lower prediction cost associated with it. However, in certain specific cases, a cost-sensitive problem can be transformed into a classification problem. (see for instance Proposition 5.8.1 from \citep{reeve2019learning}). 
\end{rem}
\subsection{Our model} 
\label{sec:model}
We consider a hierarchical structure on the instance space:
 for integers $h\geq 0$, $K\geq 2$, the instance space is partitioned into $K^{h}$  subsets $\mathcal{X}_{h,i}$ $(1\leq i\leq K^h)$ which are referred to as \textit{cells}.
 This partitioning  is seen as a spatial $K$-tree where the root is the whole space $\mathcal{X}=\mathcal{X}_{0,1}$ and at each depth $h\geq 0$, we have $\displaystyle\cup_{i=1}^{K^h}\mathcal{X}_{h,i}=\mathcal{X}$  and each cell $\mathcal{X}_{h,i}$ induces $K$ children $\lbrace \mathcal{X}_{h+1, i_j}, j=1,\ldots, K\rbrace$ which forms a partition of $\mathcal{X}_{h,i}$. 
 Additionally, for each cell $\mathcal{X}_{h,i}$, we associate a fixed point $x_{h,i}$ $\in$ $\mathcal{X}_{h,i}$ called the center of $\mathcal{X}_{h,i}$ such that
    $\mathcal{X}_{h,i}=\lbrace x\in \mathcal{X},\; \|x-x_{h,i}\|\leq \|x-x_{h,j}\|,\;\;j\neq i\rbrace$ and where ties are broken in an arbitrary, but deterministic way. Moreover, we suppose: 

\begin{itemize}
    \item there exists some constants $\rho\in (0,1)$, $\nu_1,\nu_2>0$, with $0<\nu_1\leq 1\leq \nu_2$ such that for all $h,i$: 
    \begin{equation}
    \label{eq:ball-ass1}
    B(x_{h,i}, \nu_1\rho^h)\subset \mathcal{X}_{h,i}\subset B(x_{h,i}, \nu_2\rho^h).
    \end{equation}
    where $B(x,r)=\lbrace z\in \mathcal{X},\;\|x-z\|< r\rbrace$ for $x \in \mathcal{X}$, and $r>0$.
    \item that $K=2$, and $\mathcal{X}_{h,i}$ is partitioned as
    $\mathcal{X}_{h,i}=\mathcal{X}_{h+1,2i-1}\cup \mathcal{X}_{h+1,2i}.$
     
  \end{itemize} 
A similar model was also considered in different settings such as bandit theory \citep{munos2014bandits}.
In this paper, we consider an active learning algorithm for cost-sensitive multi-class classification that exploits the hierarchical structure of the instance space. Given a hierarchical partition of the instance space,  and some integers $h$ and $i$, we aim at estimating the Bayes cost-sensitive classifier in $\mathcal{X}_{h,i}$ and thereby in the whole space. Our algorithm proceeds by only interacting with cells $\mathcal{X}_{h,i}$ $($for some $h, i)$ where there is substantial classification uncertainty. It interacts with $\mathcal{X}_{h,i}$ via its center $x_{h,i}$ by requesting some of its label costs. In the following sections, we will sometimes consider $x_{h,i}$ as the cell $\mathcal{X}_{h,i}$ if no ambiguity is present.

One of the main reasons for which we use the hierarchical structure on the instance space in this paper is to align with the majority of previous works on standard nonparametric active learning, such as \citep{castro2008minimax,minsker2012plug,locatelli2017adaptivity,kpotufe2022nuances}.  However, We could also use the pool-based approach considered in \citep{njike2022multi} where a k-NN active learner has been proposed. 
\subsection{Overall description of our algorithm}
\label{sec:algorithm description}
With a fixed number of interactions $n$ (considered as the budget), our main objective is to provide an active learning algorithm for cost-sensitive multi-class classification, which outputs a classifier with good statistical guarantees.  Our algorithm is inspired from that provided in \citep{shekhar2021active} in the context of active learning with reject option,  and also from that provided in \citep{JMLR:v20:17-681} where a bookkeeping procedure is used in order to only deal with labels that could be the optimal one.\\ 
Our algorithm works iteratively over a finite number of steps, until the budget $n$ has been reached.\\ 
At step $t$, the instance space is hierarchically partitioned into two subsets: $\mathcal{X}^{(t)}_u$ the set of unclassified cells (for example, filled in red on Figure \ref{fig-hierarchical}), and $\mathcal{X}^{(t)}_c$ the set of classified cells (for example, filled in green on Figure \ref{fig-hierarchical}). 
\begin{figure}[h]
\centering
\begin{tikzpicture}[->,>=stealth',level/.style={sibling distance = 5cm/#1,
  level distance = 1.4cm}] 
\node [arn_n] {$\mathcal{X}_{0,1}$ {}}
    child{ node [arn_n] {$\mathcal{X}_{1,1}$ {}} 
            child{ node [arn_n] {$\mathcal{X}_{2,1}$ {}} 
            	child{ node [arn_r] {$\mathcal{X}_{3,1}$ {}} edge from parent node[above left] {}
                         } 
							child{ node [arn_r] {$\mathcal{X}_{3,2}$ {}} edge from parent node[above left] {}}
            }
            child{ node [arn_g] {$\mathcal{X}_{2,2}$ {}}
							edge from parent node[above left] {}}                            
    }
    child{ node [arn_n] {$\mathcal{X}_{1,2}$ {}}
            child{ node [arn_n] {$\mathcal{X}_{2,3}${}} 
							child{ node [arn_r] {$\mathcal{X}_{3,5}$}{}}
							child{ node [arn_r] {$\mathcal{X}_{3,6}$} {}}
            }
            child{ node [arn_r] {$\mathcal{X}_{2,4}$ {}} edge from parent node[above left] {}}
            };
\end{tikzpicture}
\caption{An example of a possible partition of the instance space at a given step: the classified regions are represented by the cell filled in green, and the unclassified region by the cells filled in red: 
$\mathcal{X}^{(t)}_u=\{\mathcal{X}_{2,4},\mathcal{X}_{3,1},\mathcal{X}_{3,2},\mathcal{X}_{3,5},\mathcal{X}_{3,6}\}, \; \mathcal{X}^{(t)}_c=\{\mathcal{X}_{2,2}\}$ and $\mathcal{X}=\mathcal{X}^{(t)}_u\cup \mathcal{X}^{(t)}_c$.}
\label{fig-hierarchical}
\end{figure}

Subsequently, it proceeds as follows: it selects the cell $\mathcal{X}_{h_t,i_t}$ in $\mathcal{X}^{(t)}_u$ that has the largest classification uncertainty. In this case, two outcomes are possible: 
\begin{itemize}
\item First, it could interact with $\mathcal{X}_{h_t,i_t}$ by requesting the cost $c_{h_t,i_t}(y)$ of some labels at the center $x_{h_t,i_t}$. Based on these later requests $($and possibly other requests made before the step $t$),  a subset $\mathcal{Y}_{h_t,i_t}^{(t)}\subset\mathcal{Y}$ of candidate labels is built. This is done by discarding suboptimal labels $y$, that is those with $f(x;f_{cs}^*(x))<f(x;y)$ for all $x\in \mathcal{X}_{h_t,i_t}$ with high probability. Thus, if the subset $\mathcal{Y}^{(t)}_{h_t,i_t}$ only contains one label, the cell $\mathcal{X}_{h_t,i_t}$ is added to  the current classified region $\mathcal{X}^{(t)}_c$ and removed from $\mathcal{X}^{(t)}_u$.

\item Second, the cell $\mathcal{X}_{h_t,i_t}$ could be expanded or refined. This happens when the algorithm has interacted with $\mathcal{X}_{h_t,i_t}$ a substantial number of times. In this case, the cell $\mathcal{X}_{h_t,i_t}$ is replaced by $\mathcal{X}_{h_t+1,2i_t}\cup\mathcal{X}_{h_t+1,2i_t-1}$  and removed (for example, filled in black on Figure \ref{fig-hierarchical}) from $\mathcal{X}^{(t)}_u$.
\end{itemize}
After the iteration process, our algorithm provides a classifier as detailed in Section \ref{sec:analysis}. 
\begin{table}[h]

\centering
\begin{tabular}{|l|l|} 
 \hline
 $\mathcal{Y}_{h,i}^{(t)}$ & set of candidate labels for $\mathcal{X}_{h,i}$ at step $t.$  \\ 
 $c_{h,i}^{v}(y)$  &$v^{th}$ requested cost associated to the label 
   $y$ at the center $x_{h,i}$  \\ 
 $\mathcal{X}_{c}^{(t)}$  &current classified region at step $t$\\
 $\mathcal{X}_{u}^{(t)}$  &current unclassified region at step $t$\\
 $I_{t}(\mathcal{X}_{h,i})$ & classification uncertainty of the cell $\mathcal{X}_{h,i}$  at step $t$\\
 $U_{f(:;y)}^{(t)}(\mathcal{X}_{h,i})$ & upper confidence bound of the expected  \\
  & prediction cost $f(:,y)$ in $\mathcal{X}_{h,i}$ at step $t$\\
 $L_{f(:;y)}^{(t)}(\mathcal{X}_{h,i})$ & lower confidence bound of the expected prediction cost $f(:,y)$ in $\mathcal{X}_{h,i}$ at step $t$\\
 $A_r^{(t)}(\mathcal{X}_{h,i})$ & Boolean variable linked on the refinement of $\mathcal{X}_{h,i}$ at step $t$\\
 
 \hline
\end{tabular}
\caption{Main quantities that appears in the analysis of Algorithm \ref{alg:active cost-sensitive}}
\label{table:1}
\end{table}
\subsection{Analysis of our algorithm} 
\label{sec:analysis}
This Section is devoted to a brief analysis of our algorithm while the theoretical results are in Section \ref{sec:theo-results}. We provide some intuitions behind the construction of the set of candidate labels and describe the refinement and uncertainty criteria used in our algorithm. Our techniques are built upon the construction of upper and lower bounds of the expected prediction cost functions. We therefore first introduce the following definition:  
\begin{defi}
\label{def-conf}
Given a set $A\subset [0,1]^d$, and a deterministic function $g$ defined in $A$ that takes values in $[0,1]$, we say that $g$ has confidence bounds $L_g(A)$, $U_g(A)$ at level $\delta$ in the set $A$ if  with probability at least $1-\delta$, 
$$L_g(A)\leq g(x)\leq  U_g(A)\quad \text{for all}\; x \in A.$$
\end{defi}
As we will consider in Section \ref{sec:confidence bands}, the quantities $L_g(A)$, $U_g(A)$ are randomized and the probability $(1-\delta)$ is with respect  to their uncertainty. Importantly, the dependence of these confidence bounds with respect to $\delta$ is omitted for clarity purposes.

\textbf{How can we accurately construct the (random) set of candidate labels} $\bm{\mathcal{Y}^{(t)}_{h,i}?}$ For any cell $\mathcal{X}_{h,i}$, the corresponding set of candidate labels is initialized to $\mathcal{Y}_{h,i}^{(0)}=\mathcal{Y}$. For $t\geq 1$, let $\mathcal{X}_{h_t,i_t}$  be the cell chosen from the unclassified region by our algorithm at step $t.$ To compute 
$\mathcal{Y}^{(t)}_{h_t,i_t}$, we first need to provide confidence bounds on the expected prediction cost functions $f(.;y)$ for all $y\in \mathcal{Y}^{(t-1)}_{h_t,i_t}$. That is, according to Definition \ref{def-conf}, for a given parameter $\delta'\in (0,1)$, we have to provide the quantities $L_{f(:;y)}^{(t)}(\mathcal{X}_{h_t,i_t})$, $U_{f(:;y)}^{(t)}(\mathcal{X}_{h_t,i_t})$ such that with probability at least $1-\delta'$, we have for all $x\in\mathcal{X}_{h_t,i_t}$, 
\begin{equation}
   \label{eq-upper-lower-conf}
    L_{f(:;y)}^{(t)}(\mathcal{X}_{h_t,i_t})\leq f(x;y)\leq U_{f(:;y)}^{(t)}(\mathcal{X}_{h_t,i_t})
\end{equation}

Once confidence bounds are constructed, the set of candidate labels $\mathcal{Y}^{(t)}_{h,i}$ only contains labels $y$ from $\mathcal{Y}^{(t-1)}_{h,i}$ that satisfy: 
\begin{equation}
\label{eq:updateY}
L_{f(:;y)}^{(t)}(\mathcal{X}_{h_t,i_t})\leq\min_{y\in \mathcal{Y}^{(t-1)}_{h_t,i_t}}\, U_{f(:;y)}^{(t)}(\mathcal{X}_{h_t,i_t}),
\end{equation}

Particularly, under some smoothness assumption, for all $x$ $\in$ $\mathcal{X}_{h_t,i_t}$, $f^*_{cs}(x)$ is never discarded and $\mathcal{X}_{h_t,i_t}$ is therefore correctly labelled when it is added to the classified region $\mathcal{X}_c^{(t)}.$ Importantly, for any cell $\mathcal{X}_{h,i}$, the confidence bounds can be initialized to $L_{f(:;y)}^{(0)}(\mathcal{X}_{h,i})=-\infty$ and $U_{f(:;y)}^{(0)}(\mathcal{X}_{h,i})=+\infty$ and are progressively improved as costs are requested. These confidence bounds could take various forms depending on assumptions made on the considered problem of cost-sensitive classification. For instance in Section \ref{sec:confidence bands}, under some smoothness assumption on the expected prediction cost function $f(.;y)$, we will provide  specific expressions of $L_{f(:;y)}^{(t)}(\mathcal{X}_{h,i})$ and $U_{f(:;y)}^{(t)}(\mathcal{X}_{h,i})$  which can be used by our algorithm.\\

\textbf{How can we choose the most uncertain cell from the unclassified region?}
 To  measure the level of uncertainty of a cell $\mathcal{X}_{h,i}$ from the unclassified region $\mathcal{X}_u^{(t)}$, we can use the confidence bounds of each expected prediction cost function $f(.;y).$ For $\mathcal{X}_{h,i}$ from $\mathcal{X}_u^{(t)}$, let $I_t(x_{h,i})$ be the quantity defined as: 
 \begin{equation}
\label{eq:information}
I_t(\mathcal{X}_{h,i})=\min_{y\in \mathcal{Y}^{(t)}_{h,i}}\,U_{f(:;y)}^{(t)}(\mathcal{X}_{h,i})-\min_{y\in \mathcal{Y}^{(t)}_{h,i}} L_{f(:;y)}^{(t)}(\mathcal{X}_{h,i})
\end{equation}
According to Equation \eqref{eq:updateY}, the quantity $I_t(\mathcal{X}_{h,i})$ is non-negative (with high probability). The algorithm therefore  chooses to interact with  the cell $\mathcal{X}_{h,i}$  that has the largest value $I_t(\mathcal{X}_{h,i})$ over all the cells that belong to the unclassified region.

\textbf{Refinement criterion.} Once selecting the cell $\mathcal{X}_{h_t,i_t}$ from $\mathcal{X}_u^{(t)}$, the algorithm decides to refine when it becomes difficult to differentiate labels from the current set of candidate labels after a large number of interactions. In Algorithm \ref{alg:active cost-sensitive}, the refinement criteria is characterized by a Boolean variable $A^{(t)}_{r}(\mathcal{X}_{h_t,i_t})$ such that
\begin{equation}A_r^{(t)}(\mathcal{X}_{h_t,i_t})=\text{True}\;\Longleftrightarrow \;\mathcal{X}_{h_t,i_t}\;\text{has to be refined at step $t$}.
\end{equation}
For the refinement criteria, we could consider a \textit{cut-off} (possibly depending on the current depth $h_t$ and some complexity parameters) such that if the cell $\mathcal{X}_{h_t,i_t}$ has not been added to the classified region within a given number of interactions with the algorithm, the value $True$ is assigned to the variable $A^{(t)}_{r}(\mathcal{X}_{h_t,i_t})$ and thus the cell $\mathcal{X}_{h_t,i_t}$ has to be refined. In this case, as we pointed out in Section \ref{sec:algorithm description}, it is replaced by its children $\mathcal{X}_{h_t+1,2i_t},\mathcal{X}_{h_t+1,2i_t-1}$ in the unlabelled region $\mathcal{X}_u^{(t)}$. Moreover, the cells $\mathcal{X}_{h_t+1,2i_t}$,
$\mathcal{X}_{h_t+1,2i_t-1}$ inherit of some properties of $\mathcal{X}_{h_t,i_t}$ namely confidence bounds and the set of candidate labels.     

\textbf{Estimator of the Bayes cost-sensitive classifier.}
Let $T_n$ be the last step in Algorithm \ref{alg:active cost-sensitive}. The resulting active learning classifier provided by Algorithm \ref{alg:active cost-sensitive} is defined as follows:
    \begin{equation} 
    \label{eq:classifier}
    \hat{g}_n(x)=\argmin_{y\in\mathcal{Y}_{h,i}^{(T_n)}}\;U_{f(:;y)}^{(T_n)}(\mathcal{X}_{h,i}),
    \end{equation}
  for $x$ $\in$ $\mathcal{X}_{h,i}$, with $\mathcal{X}_{h,i} \in  \mathcal{X}^{(T_n)}_u\cup\mathcal{X}^{(T_n)}_c.$ As detailed in section \ref{sec:rate}, it enjoys goods statistical guarantees for cost-sensitive classification under some noise and smoothness assumptions.


\subsection{Theoretical results} 
\label{sec:theo-results}
This Section is devoted to theoretical guarantees achieved by our algorithm. We first begin by presenting theoretical properties of our algorithm under some assumptions and therefore the rate of convergence achieved by the resulting classifier $\hat{g}_n$ in Algorithm \ref{alg:active cost-sensitive}. Secondly, we complete the result on the rate of convergence by providing a (near) matching lower bound. For Clarity purpose, the proofs of these results are relegated to Section \ref{sec:appendix}.
\subsubsection{Assumptions}
\label{sec:assumption} 
In this Section, we consider three assumptions which are commonly used in the nonparametric setting for the analysis of the rates of convergence both in active and passive learning. 
\begin{ass}[Hölder-smoothness assumption]~\\
\label{ass:smoothness}
There exist $\alpha\leq 1$ and $L>0$ such that, for all $y$ $\in$ $\mathcal{Y}$, 
$$\vert f(x;y)-f(z;y)\vert \leq L\parallel x-z\parallel^{\alpha}\quad\text{for all}\;x,z\in\mathcal{X}.$$
\end{ass}
This assumption means that two close points tend to have the same prediction cost.  

\begin{ass}[Strong density assumption]
\label{ass:strong density}~\\The marginal probability on $\mathcal{X}$ admits a density $p_X$ and there exist $\mu_{min}, \mu_{max}>0$ such that for all $x$ with $p_X(x)>0$,
$$\mu_{min}\leq p_X(x)\leq \mu_{max}.$$
\end{ass}
The Assumption \ref{ass:strong density} is standard both in passive and active learning \citep{audibert2007fast,minsker2012plug,reeve2019learning}. However, it can be weakened by considering a particular hierarchical partition as stated in Section \ref{sec:model}. For instance, we can consider a partition of the space with cells (balls for example) in a hierarchical way, so that at depth $h>0$, for any cell $\mathcal{X}_{h,i}$ at level $h$, we have
$c_2 \rho^{hd} \leq P_X(\mathcal{X}_{h,i}) \leq c_1 \rho^{hd}$ (where $c_1$ and $c_2$ and $\rho$ are absolute constants and $P_X$ the marginal probability defined on $\mathcal{X})$
and such that the covering $($at depth $h)$ is somewhat tight, that is
$Supp(P_X) \subset \bigcup_i  \mathcal{X}_{h,i}$       and $\sum_i    P_X(\mathcal{X}_{h,i}) \leq C$
where $C$ is some universal constant.

Next, let us state the noise assumption that characterizes the behavior of the expected cost functions at the decision boundary. 

For $y\in \mathcal{Y}$, let $\Delta(x,y)$ be defined as:
\begin{equation}
    \label{eq-delta}
    \Delta(x,y)=f(x;y)-\min_{y'\in\mathcal{Y}}\,f(x;y').
\end{equation}
We define $\Delta(x)$ as 
\begin{equation}
\label{eq-delta1}
 \Delta(x)= \left\{
    \begin{array}{ll}
        \min_{y\in\mathcal{Y}}\lbrace \Delta(x,y):\;\Delta(x,y)>0\rbrace & \mbox{If there exists } y\in\mathcal{Y}\; \text{such that}\; \Delta(x,y)>0\\
        \infty & \mbox{otherwise.}
    \end{array}
\right. 
\end{equation}

Given an instance $x$, let $f(x;y^{(1)})\leq f(x;y^{(2)})\leq...\leq f(x;y^{(M)})$ denote order statistic on $f(x,y)$, $y\in\mathcal{Y}$. we define  $\Delta'(x)=f(x;y^{(2)})-f(x;y^{(1)}).$

\begin{ass}[Refined margin noise assumption]\label{Ass:margin-noise}~\\
There exist parameters $\beta, C_{\beta}, C'_{\beta}, \tau\geq 0$ such that for all $\epsilon>0$, 
\begin{equation} 
P_{X}(x\in  \mathcal{X},\;\;\Delta(x)\leq \epsilon)\leq  C_{\beta}\epsilon^{\beta},\quad\text{and}\quad P_{X}(x\in \mathcal{X},\;\;\Delta'(x)\leq \epsilon)\leq \tau+ C'_{\beta}\epsilon^{\beta}
\end{equation}
\end{ass}

An equivalent version of Assumption \ref{Ass:margin-noise} was recently introduced in the setting of classical active learning \citep{kpotufe2022nuances}. Assumption \ref{Ass:margin-noise} generalizes the Tsybakov's noise assumption used in previous works on cost-sensitive active learning, especially in parametric setting \citep{krishnamurthy2017active,JMLR:v20:17-681}. The probability mass of the region where the Bayes cost-sensitive classifier is not unique is taken into account by $\tau$. Particularly, when $\tau=0$, we recover the assumption used in \citep{JMLR:v20:17-681} which assumes the uniqueness of the Bayes cost-sensitive classifier.

\subsubsection{Specific choice of confidence bounds}
\label{sec:confidence bands}
In this Section, we will provide  precise expressions of confidence bounds \eqref{eq-upper-lower-conf} on the expected prediction cost function under Assumption \ref{ass:smoothness}. 
Before, let us introduce the following quantities:  
\begin{equation}
    \label{eq:biais00}
B_h=\left(\nu_2\rho^h\right)^{\alpha},\quad V(n_a)=\sqrt{\frac{\log(2n^3M)}{2n_a}}
\end{equation}
where $\nu_2, \rho$ come from \eqref{eq:ball-ass1}, $\alpha$ comes from Assumption \ref{ass:smoothness}, $n$ is the label budget, $M$ the number of labels, and $n_a\leq n$ an integer. Let $y$ $\in$ $\mathcal{Y}^{(t)}_{h,i}$, and $c_{h,i}^{v}(y)$ the $v$-th requested prediction cost associated to $y$ at $x_{h,i}.$ For all $y$ $\in$ $\mathcal{Y}^{(t)}_{h,i}$, we consider the following estimator of $f(x_{h,i};y)$ at step $t$:
\begin{equation}
\label{eq:estimator}
\hat{f}^{(t)}(x_{h,i};y)=\frac{1}{n_{h,i}(t)}\sum_{v=1}^{n_{h,i}(t)}c_{h,i}^{v}(y),
\end{equation}
where $n_{h,i}(t)$ is the number of times the algorithm has interacted with the cell $\mathcal{X}_{h,i}$ up to step $t$.

Additionally, for all $x$ $\in$ $\mathcal{X}_{h,i}$, for all $y$ $\in$  $\mathcal{Y}^{(t)}_{h,i}$, we define the estimator $\hat{f}^{(t)}(x;y)$ of $f(x;y)$ at step $t$ as: 
$$\hat{f}^{(t)}(x;y):=\hat{f}^{(t)}(x_{h,i};y).$$
The expression of the confidence bounds are built upon the following decomposition for all $x\in\mathcal{X}_{h,i}$ 

$$\vert\hat{f}^{(t)}(x;y)-f(x;y)\vert\leq  \vert\hat{f}^{(t)}(x_{h,i};y)-f(x_{h,i};y)\vert +\vert f(x;y)-f(x_{h,i};y)\vert$$
Let us assume that there exists a favorable event $E$ (which will be explicitly clarified in Section \ref{sec:appendix}), in which  for any step $t$, for any cell $\mathcal{X}_{h,i}$ that has interacted with the algorithm, we have for all $y$ $\in$  $\mathcal{Y}^{(t)}_{h,i}$: 

\begin{equation}
    \label{eq:hoefdding01}
     \vert\hat{f}^{(t)}(x_{h,i};y)-f(x_{h,i};y)\vert\leq V(n_{h,i}(t))
\end{equation}
In this case, by using Assumption \ref{ass:smoothness}, and Equation \eqref{eq:ball-ass1},      we have for any $x$ $\in$ $\mathcal{X}_{h,i}$, for all $y$ $\in$ $\mathcal{Y}_{h,i}^{(t)}$: 
\begin{equation}
\label{eq:lower00}
    f(x;y)\geq\bar{L}_{f(:;y)}^{(t)}(\mathcal{X}_{h,i}):=\hat{f}^{(t)}(x_{h,i};y)-V(n_{h,i}(t))-B_h,
\end{equation}
\begin{equation}
\label{eq:upper00}
    f(x;y)\leq\bar{U}_{f(:;y)}^{(t)}(\mathcal{X}_{h,i}):=\hat{f}^{(t)}(x_{h,i};y)+V(n_{h,i}(t))+B_h.
\end{equation}
The confidence bounds are thus defined as:
\begin{equation}
\label{eq:lower01}
    L_{f(:;y)}^{(t)}(\mathcal{X}_{h,i})=\max\left(\bar{L}_{f(:;y)}^{(t)}(\mathcal{X}_{h,i}),L_{f(:;y)}^{(t-1)}(\mathcal{X}_{h,i})\right),
\end{equation}
\begin{equation}
\label{eq:upper01}
    U_{f(:;y)}^{(t)}(\mathcal{X}_{h,i})=\min\left(\bar{U}_{f(:;y)}^{(t)}(\mathcal{X}_{h,i}),U_{f(:;y)}^{(t-1)}(\mathcal{X}_{h,i})\right).
\end{equation}
with $L_{f(:;y)}^{(0)}(\mathcal{X}_{h,i})=-\infty, \;U_{f(:;y)}^{(0)}(\mathcal{X}_{h,i})=+\infty$ for all $h,i.$

\subsubsection{Rate of convergence}
\label{sec:rate}
Before providing the result on the rate of convergence, let us introduce the following definition:
\begin{defi}[Cost-sensitive classification measure]\label{def:cost-sensitive measure}~\\
Let $\Xi=(0,1)\times(0,+\infty)^{4}\times[0,1]\times (0,1)\times(0,\infty)$. For any $\zeta=(\alpha,L,\beta,C_{\beta},C'_{\beta},\tau,\mu_{\min},\mu_{\max})\in \Xi$, we denote $\mathcal{P}_{cs}(\zeta)$ the class of probability measures $P$ on $\mathcal{X}\times [0,1]^M$ such that: (a) The expected conditional cost functions $($associated to $P)$ satisfy Assumption~\ref{ass:smoothness} with parameter $\alpha$, and $L$. (b) The probability $P$ satisfies Assumption \ref{Ass:margin-noise} with parameters $\beta, C_{\beta}, C'_{\beta}, \tau$. (c) The probability $P$ satisfies Assumption \ref{ass:strong density} with parameters $\mu_{\min},\mu_{\max}$. 

\end{defi}
\begin{algorithm}[!htbp]
\caption{Nonparametric active learning algorithm for cost-sensitive classification}
\label{alg:active cost-sensitive}
\KwIn{Budget $n$}  
\textbf{Initialization} $\ell=0$ (current budget), $t=0$, $\mathcal{X}_{0,1}=[0,1]^d$\\
$\mathcal{X}_u^{(t)}=\{\mathcal{X}_{0,1}\}$ \quad\quad //The unclassified region\\
$\mathcal{X}_c^{(t)}=\{\}$ \quad\quad //The classified region\\
$\mathcal{Y}_{h,i}^{(t)}=[M]$ the candidate labels for $\mathcal{X}_{h,i}$ $\in$ $\mathcal{X}_u^{(t)}$\\
For all $\mathcal{X}_{h,i}$ $\in$ $\mathcal{X}_u^{(t)}$, for all $y$ $\in$ $\mathcal{Y}_{h,i}^{(t)}$,   $L_{f(:;y)}^{(t)}(\mathcal{X}_{h,i})=-\infty$ and $U_{f(:;y)}^{(t)}(\mathcal{X}_{h,i})=+\infty$\\
$A_r^{(t)}(\mathcal{X}_{h,i})=False$ for all $\mathcal{X}_{h,i}$ $\in$ $\mathcal{X}_u^{(t)}$\\
\While{$\ell\leq n$}{//Choose a candidate cell with most uncertainty according to \eqref{eq:information}\\ $\mathcal{X}_{h_t,i_t} \in \argmax_{\mathcal{X}_{h,i}\in\mathcal{X}_u^{(t)}}\; I_t(\mathcal{X}_{h,i})$ \\
\If{$A_r^{(t)}(\mathcal{X}_{h_t,i_t})=True$}{//Refine and pass information to the next depth\\$\mathcal{X}_u^{(t)}=\mathcal{X}_u^{(t)}\setminus \lbrace \mathcal{X}_{h_t,i_t}\rbrace\cup \lbrace \mathcal{X}_{h_t+1,2i_t-1}; \mathcal{X}_{h_t+1,2i_t}\rbrace$\\
$\mathcal{Y}_{h_t+1,2i_t-1}^{(t)}=\mathcal{Y}_{h_t,i_t}^{(t)}$\\
$\mathcal{Y}_{h_t+1,2i_t}^{(t)}=\mathcal{Y}_{h_t,i_t}^{(t)}$\\
$L_{f(:;y)}^{(t)}(\mathcal{X}_{h_t+1,2i_t-1})=L_{f(:;y)}^{(t)}(\mathcal{X}_{h_t,i_t})$\\ $U_{f(:;y)}^{(t)}(\mathcal{X}_{h_t+1,2i_t-1})=U_{f(:;y)}^{(t)}(\mathcal{X}_{h_t,i_t})$\\
$L_{f(:;y)}^{(t)}(\mathcal{X}_{h_t+1,2i_t})=L_{f(:;y)}^{(t)}(\mathcal{X}_{h_t,i_t})$\\ $U_{f(:;y)}^{(t)}(\mathcal{X}_{h_t+1,2i_t})=U_{f(:;y)}^{(t)}(\mathcal{X}_{h_t,i_t})$\\
$A_r^{(t)}(\mathcal{X}_{h_t+1,2i_t})=False$\\
$A_r^{(t)}(\mathcal{X}_{h_t+1,2i_t-1})=False$
}  
\Else{//Interact with  $\mathcal{X}_{h_t,i_t}$ via its center $x_{h_t,i_t}$\\
Query cost of predicting $y$ for any $y$ $\in \mathcal{Y}_{h_t,i_t}^{(t)}$ at $x_{h,i}$\\
$\ell=\ell+1$\\
Update $A_r^{(t)}(\mathcal{X}_{h_t,i_t})$\\
\For{$y\in \mathcal{Y}_{h_t,i_t}^{(t)}$}{
Update the confidence bounds $L_{f(:;y)}^{(t)}(\mathcal{X}_{h_t,i_t})$ and $U_{f(:;y)}^{(t)}(\mathcal{X}_{h_t,i_t})$.}
$\mathcal{Y}_{h_t,i_t}^{(t)}=\lbrace y\in \mathcal{Y}^{(t)}_{h_t,i_t},\;L_{f(:;y)}^{(t)}(\mathcal{X}_{h_t,i_t})\leq\min_{y\in \mathcal{Y}^{(t)}_{h_t,i_t}}\,U_{f(:;y)}^{(t)}(\mathcal{X}_{h_t,i_t})\rbrace$\\
\If{$\vert \mathcal{Y}_{h_t,i_t}^{(t)}\vert=1$}{$\mathcal{X}_c^{(t)}=\mathcal{X}_c^{(t)}\cup \lbrace \mathcal{X}_{h_t,i_t}\rbrace$\\
$\mathcal{X}_u^{(t)}=\mathcal{X}_u^{(t)}\setminus \lbrace \mathcal{X}_{h_t,i_t}\rbrace$}
}$t=t+1$}
\KwOut{Classifier $\hat{g}_n$ defined by \eqref{eq:classifier}}
\end{algorithm}
\begin{theo}[Rate of convergence]\label{theo:upper-bounds}~\\
For any $\zeta=(\alpha,L,\beta,C_{\beta},C'_{\beta},\tau,\mu_{\min},\mu_{\max})\in \Xi$ (introduced in Definition \ref{def:cost-sensitive measure}) with $\alpha\beta\leq d$, there exists an absolute constant $c_{cs}$ such that for all $\delta$ $\in$ $(0,\frac{1}{2})$, for any $P$ $\in$ $\mathcal{P}_{cs}(\zeta)$, and label budget $n\geq \frac{1}{\delta}$, if at each step $t$ of Algorithm \ref{alg:active cost-sensitive}, we consider: 
\begin{itemize}
    \item $A_r^{(t)}(\mathcal{X}_{h_t,i_t})=True$ is equivalent to $V(n_{h_t,i_t}(t))\leq 2B_{h_t}$, where $n_{h,i}(t)$ is the number of interactions made with the cell $\mathcal{X}_{h,i}$ up to step $t$, $V(.)$ and $B_h$ are defined in \eqref{eq:biais00}. 
    \item The confidence bounds are updated according to \eqref{eq:lower01} and \eqref{eq:upper01}.
\end{itemize}
Then, the classifier provided by Algorithm \ref{alg:active cost-sensitive} satisfies with probability at least $1-\delta$, 
\begin{equation}
\varepsilon_{cs}(\hat{g}_n)\leq\left\{
 \begin{array}{ll}   
        \left(\frac{\tau}{n}.c_{cs}\log\left(2n^3M\right)\right)^{\frac{\alpha(\beta+1)}{2\alpha+d}} \quad\text{if}\; \tau\geq \tau_{0}\\
        \left(\frac{1}{n}.c_{cs}\log\left(2n^3M\right)\right)^{\frac{\alpha(\beta+1)}{2\alpha+d-\alpha\beta}}\;\text{if}\;\tau\leq \tau_0,  
    \end{array}
\right.
\end{equation}

where $\tau_0=\left(\frac{1}{n}.c_{cs}\log\left(2n^3M\right)\right)^{\frac{\alpha\beta}{2\alpha+d-\alpha\beta}}.$

\end{theo}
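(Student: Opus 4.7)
The plan is to follow a standard favorable-event / tree-budget template in the style of \citep{minsker2012plug, locatelli2017adaptivity, shekhar2021active}, adapted to the cost-sensitive setting by carefully tracking the two thresholds $\Delta$ and $\Delta'$ of Assumption~\ref{Ass:margin-noise}. First I would define the event $E$ on which, simultaneously over every cell ever touched, every label $y$, every step $t$, and every admissible value of $n_{h,i}(t)$, the Hoeffding bound $|\hat f^{(t)}(x_{h,i};y) - f(x_{h,i};y)| \leq V(n_{h,i}(t))$ holds; the specific calibration $\log(2n^3 M)$ inside $V$ is exactly what a union bound over $O(n^3 M)$ such events requires in order to conclude $\P(E) \geq 1 - \delta$ (for $n \geq 1/\delta$). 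Combined with Assumption~\ref{ass:smoothness} and \eqref{eq:ball-ass1}, $E$ makes \eqref{eq:lower01}--\eqref{eq:upper01} into honest confidence bounds for $f(x;y)$ throughout $\mathcal{X}_{h,i}$, from which I would derive two structural consequences: (i) any label which is Bayes-optimal at some point of the cell is never discarded by the update \eqref{eq:updateY}; and (ii) for $x$ in the cell $\mathcal{X}_{h,i}$ containing it at termination,
\begin{equation*}
0 \leq f(x;\hat g_n(x)) - f(x;f^*_{cs}(x)) \;\leq\; 2\bigl(V(n_{h,i}(T_n)) + B_h\bigr) \;\leq\; 6 B_h,
\end{equation*}
using the refinement rule $V(n_{h,i}(t)) \leq 2 B_h$ and the definition \eqref{eq:classifier} of $\hat g_n$.

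Next I would control the shape of the processed tree. A cell $\mathcal{X}_{h,i}$ with $h \geq 1$ is processed only if its parent was refined, which under $E$ forces the existence of two candidate labels whose expected costs differ by at most $O(B_{h-1})$ throughout the parent; Assumption~\ref{ass:smoothness} then propagates this to every $x \in \mathcal{X}_{h,i}$, so that $\Delta'(x) \leq C\rho^{\alpha h}$ on the whole cell. Combined with Assumptions~\ref{ass:strong density} and \ref{Ass:margin-noise}, the number of processed cells at depth $h$ is therefore at most
\begin{equation*}
N^{\mathrm{cell}}_h \;\lesssim\; \bigl(\tau + \rho^{\alpha h \beta}\bigr)\,\rho^{-hd},
\end{equation*}
while the refinement criterion guarantees each of them absorbs at most $\lesssim \log(n^3M)\,\rho^{-2\alpha h}$ queries before being classified or refined. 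Summing over $h \leq h^*$, and using $\alpha\beta \leq d$ to see that the geometric series is dominated by its last term, gives
\begin{equation*}
\text{Budget}(h^*) \;\lesssim\; \log(n^3M)\,\bigl(\tau + \rho^{\alpha h^*\beta}\bigr)\,\rho^{-h^*(2\alpha+d)}.
\end{equation*}
Taking $h^*$ maximal so that this is $\leq n$ yields $\rho^{\alpha h^*} \asymp (\tau \log(n^3M)/n)^{\alpha/(2\alpha+d)}$ when $\tau \geq \rho^{\alpha h^*\beta}$ and $\rho^{\alpha h^*} \asymp (\log(n^3M)/n)^{\alpha/(2\alpha+d-\alpha\beta)}$ otherwise; the balance value coincides with the threshold $\tau_0$ in the statement.

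It then remains to convert the pointwise bound $O(B_{h^*})$ into the claimed excess-cost bound, and here the delicate point is that the $\tau$-mass appears only in the budget and not in the error. Indeed, on $E$ the output $\hat g_n(x) = \argmin_y U$ equals a Bayes-optimal label at $x$ unless some non-optimal $y$ has $f(x;y) - f(x;f^*_{cs}(x)) \leq CB_{h^*}$, i.e.\ unless $\Delta(x) \leq CB_{h^*}$, which is controlled by the \emph{first} (not the second) inequality in Assumption~\ref{Ass:margin-noise} and therefore carries no $\tau$ term. Consequently
\begin{equation*}
\varepsilon_{cs}(\hat g_n) \;\leq\; 6 B_{h^*} \cdot \P_X\bigl(\Delta(X) \leq 6C B_{h^*}\bigr) \;\lesssim\; \rho^{\alpha h^*(\beta+1)},
\end{equation*}
and plugging in the two values of $h^*$ from the previous step reproduces the two rates in the theorem. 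The main obstacle is precisely this $\Delta$-versus-$\Delta'$ disentangling: $\Delta'$ governs the tree's shape (hence the budget, where $\tau$ necessarily enters) while $\Delta$ governs the set $\{\hat g_n \neq f^*_{cs}\}$ (hence the error, where $\tau$ does not); the remaining ingredients (Hoeffding concentration, density-based cell counting, geometric sums, and solving the budget equation) are then routine.
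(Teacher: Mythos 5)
Your overall skeleton matches the paper's proof: the favorable event $E$ with the $\log(2n^3M)$ Hoeffding calibration, the fact that Bayes-optimal labels are never eliminated, the counting of surviving cells at depth $h$ via $\Delta'$ (hence $\tau+\rho^{h\alpha\beta}$ times $\rho^{-hd}$), the per-cell query cap $\lesssim \rho^{-2\alpha h}\log(2n^3M)$ from the refinement trigger, the budget equation whose solution gives the two regimes and the threshold $\tau_0$, and the $\Delta$-versus-$\Delta'$ disentangling (error region controlled by the first, $\tau$-free, margin inequality) are all exactly the ingredients of Lemmas A.1--A.5 and the final computation in the paper.

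However, there is a genuine gap in your conversion step, i.e.\ in consequence (ii) and in the final inequality $\varepsilon_{cs}(\hat g_n)\leq 6B_{h^*}\,\P_X(\Delta(X)\leq 6CB_{h^*})$. First, the chain $f(x;\hat g_n(x))-f(x;f^*_{cs}(x))\leq 2\bigl(V(n_{h,i}(T_n))+B_h\bigr)\leq 6B_h$ is not justified for cells still in $\mathcal{X}_u^{(T_n)}$ at termination: the refinement rule only says a cell is refined \emph{once} $V(n_{h,i}(t))\leq 2B_h$; a cell that survives in the unclassified region may have been queried too few times for this to hold, and a freshly created child has $n_{h,i}(T_n)=0$, so $V(n_{h,i}(T_n))$ is not even finite and only the bounds inherited from its parent are informative. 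Second, even granting a per-cell bound $6B_h$ at the cell's own depth $h$, your final bound silently replaces $h$ by the deepest depth $h^*$, which fails for shallow unclassified cells ($B_h>B_{h^*}$); and summing depth-by-depth instead does not give the rate, since the resulting series is dominated by small $h$. The paper closes exactly this hole with two ingredients you omit: the maximal classification uncertainty $I(t)=\max_{\mathcal{X}_{h,i}\in\mathcal{X}_u^{(t)}} I_t(\mathcal{X}_{h,i})$ is non-increasing in $t$ (Lemma on monotonicity, a consequence of the min/max updates \eqref{eq:lower01}--\eqref{eq:upper01} and of the inheritance at refinement), and the algorithm always selects the argmax cell, so at the step when the deepest cell was created its parent realized $I$, satisfied $V\leq 2B_{h_{\max}-1}$, and hence $I\leq 6B_{h_{\max}-1}$ at that step and ever after. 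This yields the \emph{uniform} pointwise bound $6B_{h_{\max}(n)-1}$ over all unclassified cells at $T_n$, after which your $\Delta$-based margin argument and the budget bound on $\rho^{h_{\max}(n)}$ (your $h^*$) go through as you describe.
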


Our result shows that improvement over passive learning \citep{reeve2019learning} depends on the probability mass $\tau$ of region where the Bayes cost-sensitive classifier is non unique. When $\tau$ is small enough, the obtained rate is better than the classical rate obtained in passive learning \citep{reeve2019learning} which is of order of $n^{-\frac{\alpha(\beta+1)}{2\alpha+d}}$. On the other hand, when $\tau$ is large enough, the obtained rate has the same order as in the passive learning counterpart \citep{reeve2019learning}.  

\subsubsection{Lower bounds}
In this Section, we provide a result that shows the optimality of the rate of convergence provided in Theorem \ref{theo:upper-bounds}. The result is stated for the binary case $(M=2)$. However, the extension to multi-class is not difficult as we can always consider the subset of probabilities $P\in\mathcal{P}_{cs}(\zeta)$ with the expected prediction cost functions $f(.;3)\equiv\ldots f(.;M)\equiv 1$ and work only with $f(.;1) ,\; f(.;2)$ to obtain the corresponding lower bounds.  

\begin{theo}[Lower bound for cost-sensitive classification]\label{theo-lower}~\\
Let us consider the cost-sensitive classification problem with $M=2$ and $\mathcal{Y}=\{0,1\}$. Let $\zeta=(\alpha,L,\beta,C_{\beta},C'_{\beta},\tau,\mu_{\min},\mu_{\max})\in \Xi$ introduced in Definition \ref{def:cost-sensitive measure}. We assume that $L,C_{\beta},C'_{\beta} \in (1,\infty)$,  $\alpha\beta \leq d$ and $\mu_{min}\in (0,\tau)$. There exist  constants $C_{cs}$, $C_{cs}'$  (independent of $n$) such that:
$$\inf_{\hat{g}_n}\sup_{P\in\mathcal{P}_{cs}(\zeta)} \mathbb{E} (R_{cs}(\hat{g}_n))-R_{cs}(f^*_{cs}))\geq  \min(a_{n,\tau},a'_{n,\tau}),$$

where 
$$a_{n,\tau}=C_{cs}\max\left(\left(\frac{\tau}{n}\right)^{\frac{\alpha(\beta+1)}{2\alpha+d}}, \left(\frac{1}{n}\right)^{\frac{\alpha(\beta+1)}{2\alpha+d-\alpha\beta}}\right),\quad a'_{n,\tau}=C'_{cs}
\max\left(\left(\frac{1}{n}\right)^{\frac{\alpha(\beta+1)}{2\alpha+d}}, \left(\frac{1}{\tau}\right)^{\frac{\alpha(\beta+1)}{2\alpha+d}}\left(\frac{1}{n}\right)^{\frac{\alpha(\beta+1)}{2\alpha+d-\alpha\beta}}\right),$$
the infimum is taken over all active learning algorithms that provide a classifier $\hat{g}_{n}:\mathcal{X}\longrightarrow \{0,1\}$ based on sample (strategy) $S_n=\{(X_1,c_1),\ldots,(x_n,c_n)\}$ and the supremum runs over
all probability $P$ $\in$ $\mathcal{P}_{cs}(\zeta).$ 
\end{theo}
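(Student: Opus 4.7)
The plan is to prove this minimax lower bound through a standard reduction to multiple hypothesis testing via Assouad's lemma, adapted to the active-learning setting. Since $M=2$ and $\mathcal{Y}=\{0,1\}$, I first identify the problem with the scalar ``regression'' function $\eta(x)=f(x;0)-f(x;1)$ on $[0,1]^d$: the Bayes classifier $f^*_{cs}$ is the sign of $\eta$ and, for any $\hat g$, $\varepsilon_{cs}(\hat g)=\mathbb{E}_X[\,|\eta(X)|\,\mathds{1}\{\hat g(X)\neq f^*_{cs}(X)\}\,]$. A distribution in $\mathcal{P}_{cs}(\zeta)$ is thus encoded by a pair $(\eta,p_X)$, and in the binary case the refined margin condition collapses to the two constraints $\mathbb{P}(\eta(X)=0)\le \tau$ and $\mathbb{P}(0<|\eta(X)|\le \epsilon)\le C_\beta \epsilon^\beta$. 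Cost observations can then be realised by sampling a Bernoulli label given $X$, which makes KL computations tractable.

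Next I would set up an Audibert--Tsybakov hypercube. Partition $[0,1]^d$ into a hard region $B$ of marginal mass exactly $\tau$, on which $\eta\equiv 0$ (so $B$ saturates the $\tau$-allowance on $\Delta'$), together with $m_0$ pairwise-disjoint cells of width $h$ supporting smooth bumps. On the $j$th bump cell I set $\eta_\sigma(x)=\sigma_j\,\epsilon\,\phi((x-x_j)/h)$ with $\sigma\in\{-1,+1\}^{m_0}$, where $\phi$ equals $1$ on an inner ball and decays smoothly to $0$ at the cell boundary. I then verify that each $P_\sigma$ lies in $\mathcal{P}_{cs}(\zeta)$: Hölder smoothness forces $\epsilon\lesssim L h^\alpha$; the density bounds are met by a piecewise-constant $p_X$ (this is where $\mu_{\min}\in(0,\tau)$ is needed, so that the mass of $B$ can be set to $\tau$ while staying above $\mu_{\min}$); and the margin inequality on $\Delta$ reduces to $m_0 h^d (t/\epsilon)^{d/\alpha}\lesssim C_\beta t^\beta$ for all $t\le \epsilon$, which follows from the explicit level-set geometry of $\phi$.

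I would then apply Assouad's lemma in its active-learning form. The excess cost decomposes as a sum over bumps, each contributing of order $\epsilon h^d$, while the KL-divergence between $P_\sigma^{\otimes n}$ and $P_{\sigma^{(j)}}^{\otimes n}$ with coordinate $j$ flipped is bounded, conditional on the learner's adaptive query allocation, by $\bar n_j \epsilon^2$, where $\bar n_j$ is the expected number of queries falling in bump $j$ and $\sum_j \bar n_j\le n$. A Pinsker/Assouad bound then yields
\[
\inf_{\hat g_n}\sup_\sigma \mathbb{E}[\varepsilon_{cs}(\hat g_n)]\;\gtrsim\; m_0\,\epsilon\, h^d\,\Bigl(1-\sqrt{n\epsilon^2/m_0}\Bigr)_+,
\]
which stays of order $m_0\epsilon h^d$ as soon as $m_0\gtrsim n\epsilon^2$. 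Saturating smoothness with $\epsilon = Lh^\alpha$ and optimising $(h,m_0)$ under either the mass constraint $m_0 h^d\asymp 1-\tau$ (binding when $\tau$ is close to $1$) or the budget/margin constraint $m_0\asymp n\epsilon^2$ with $m_0 h^d \epsilon^\beta \lesssim 1$ (binding when $\tau$ is small) produces exactly the two rates $(\tau/n)^{\alpha(\beta+1)/(2\alpha+d)}$ and $n^{-\alpha(\beta+1)/(2\alpha+d-\alpha\beta)}$ appearing inside $a_{n,\tau}$.

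The companion term $a'_{n,\tau}$ comes from a second construction in which the hard region $B$ is replaced by a region carrying a small, nonzero value of $\eta$, so that the active learner cannot exploit the $\tau$-mass to save queries; balancing in the same way yields the passive-style rate inflated by the factor $\tau^{-\alpha(\beta+1)/(2\alpha+d)}$ in the small-$\tau$ regime. The final bound is obtained by taking the minimum of the two constructions. The main obstacle I anticipate is the active-learning bookkeeping: one must formalise that the \emph{expected} per-cell query counts still sum to at most $n$ even though queries depend on previously observed costs (this is the standard but delicate step in active-learning lower bounds), and simultaneously verify the two margin inequalities under the stated constraint $\mu_{\min}\in(0,\tau)$ using the specific bump profile whose level-set mass scales like $(t/\epsilon)^{d/\alpha}$.
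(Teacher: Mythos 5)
Your overall strategy is the same as the paper's: a hypercube of H\"older bumps of height $\epsilon\asymp Lh^{\alpha}$ planted next to a region of marginal mass $\tau$ on which the two conditional costs coincide, an information-theoretic multiple-testing bound with chain-rule (Castro--Nowak type) control of the KL divergence for adaptive queries, and a case analysis in $\tau$. The structural differences are minor: you build the cost distributions directly (Bernoulli costs with means $\tfrac12\pm\eta/2$) and use Assouad with per-bump expected query counts, whereas the paper first proves a classification lower bound (Theorem \ref{theo-lower-class}) via Tsybakov's Theorem 2.5 together with Gilbert--Varshamov and a global bound $KL\lesssim nr^{2\alpha}$, and then transfers it to $\mathcal{P}_{cs}(\zeta)$ through the conversion of Proposition \ref{prop:convert:cost}; for $M=2$ these are essentially equivalent, since observing the $0/1$ cost vector amounts to observing the label.

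Two of your verification steps do not go through as written. First, the margin check: for the profile you describe ($\phi=1$ on an inner ball, decaying to $0$ at the cell boundary), the set $\{0<|\eta_\sigma|\le t\}$ inside a cell is a boundary annulus of width $\asymp h\,(t/\epsilon)^{1/\alpha}$ (or $\asymp h\,t/\epsilon$ for a linear decay), so under a piecewise-constant density its mass scales like $h^{d}(t/\epsilon)^{1/\alpha}$, not $h^{d}(t/\epsilon)^{d/\alpha}$; with the correct exponent the requirement $m_0h^{d}(t/\epsilon)^{1/\alpha}\lesssim C_\beta t^{\beta}$ for all $t\le\epsilon$ fails as $t\to0$ whenever $\alpha\beta>1$, which the assumptions allow ($\alpha\beta\le d$). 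The exponent $d/\alpha$ belongs to a profile vanishing only at an isolated point, which cannot be glued continuously to the value $0$ outside its cell without reintroducing a boundary-vanishing layer. The standard repair---and the one the paper uses---is to give the transition annuli zero marginal mass, supporting $p_X$ only on the bump cores (where $|\eta_\sigma|$ equals its maximal value $\asymp\epsilon$), the $\tau$-region, and a far region; your construction needs this modification. Second, the $\tau$-dependent branches: balancing under ``$m_0h^d\asymp 1-\tau$'' produces rates in $1-\tau$, not the term $(\tau/n)^{\alpha(\beta+1)/(2\alpha+d)}$ of $a_{n,\tau}$, and your second construction for $a'_{n,\tau}$ is too vague to check; in the paper the large-$\tau$ regime is handled by re-choosing the number and mass of the bumps ($m(r)\asymp r^{-d}$, $w\asymp r^{d+\alpha\beta}$), which yields a passive-type rate dominating $(\tau/n)^{\alpha(\beta+1)/(2\alpha+d)}$, and the claimed bound $\min(a_{n,\tau},a'_{n,\tau})$ follows because in each regime the rate actually proved dominates the minimum. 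As it stands, your sketch establishes the small-$\tau$ rate $n^{-\alpha(\beta+1)/(2\alpha+d-\alpha\beta)}$ but not the other branch of the theorem.
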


The above Theorem \ref{theo-lower} provides a lower bound that matches (up to a logarithmic factor) the upper bound  provided in Theorem \ref{theo:upper-bounds}. Moreover, we can get an improvement over the lower bound obtained in the passive learning counterpart \citep{reeve2019learning}. Indeed, our result states that an active learner can outperform the passive counterpart when $\tau$ is small. In this case, the minimax rate is of order of $n^{-\frac{\alpha(\beta+1)}{2\alpha+d-\alpha\beta}}$, whereas in passive learning, it is of order of $n^{-\frac{\alpha(\beta+1)}{2\alpha+d}}.$ On the other hand, when the probability-mass $\tau$ is large enough, no active learner can outperform the passive counterpart and the minimax rate remains the same as in the passive learning counterpart.    

\section{Conclusion and perspectives}
\label{sec:conclusion}
In this paper, we studied nonparametric active learning for cost-sensitive multi-class classification. We designed an active learning algorithm that provided a classifier for cost-sensitive multi-class classification. Under a general noise assumption, we proved that our algorithm achieves optimal rate of convergence, and the gain over the corresponding passive learning is explicitly determined by the probability-mass of the boundary decision. An interesting future direction is to evaluate the gain over passive learning under our general noise assumption, when considering the parametric setting as in \citep{JMLR:v20:17-681}.


\newpage
\appendix
\section{Missing proofs}
\label{sec:appendix}
This Section is organized as follows:   in Section~\ref{sec:notations_A}, we introduce some additional notations. In Section~\ref{sec:prooftheoupp}  we formally prove Theorem~\ref{theo:upper-bounds} from the main manuscript, and in Section \ref{sec:prooftheolower} we also formally prove Theorem \ref{theo-lower}.

\subsection{Notations}
\label{sec:notations_A} 
Let $T_n$ be the last step in Algorithm \ref{alg:active cost-sensitive}, that is when the label budget is exhausted. Let $\mathcal{X}^{(t)}$ be the set of cells with which the algorithm has interacted at least once up to the end of a given step $t$, and $\tilde{\mathcal{X}}=\cup_{t\leq T_n} \mathcal{X}^{(t)} $ the set of all cells with which the algorithm has interacted along the learning process.
For a fixed $t$, and $\mathcal{X}_{h,i}$ $\in$ $\mathcal{X}^{(t)}$, for all $y$ $\in$ $\mathcal{Y}_{h,i}^{(t)}$, let $\hat{f}^{(t)}(x_{h,i};y)$ be the empirical estimate of $f(x_{h,i};y)$ as defined by:
\begin{equation}
\label{eq:estimator-suppl}
\hat{f}^{(t)}(x_{h,i},y)=\frac{1}{n_{h,i}(t)}\sum_{v=1}^{n_{h,i}(t)}c_{h,i}^{v}(y),
\end{equation} 
where $n_{h,i}(t)$ is the number of times the algorithm has interacted with the cell $\mathcal{X}_{h,i}$ up to the end of step $t$, and $c_{h,i}^{v}(y)$ the $v$-th requested prediction cost associated to $y$ at $x_{h,i}.$ Let us introduce the following parameters

\begin{equation}
    \label{eq:biais00supp}
B_h=\left(\nu_2\rho^h\right)^{\alpha},\quad V(n_a)=\sqrt{\frac{\log(2n^3M)}{2n_a}}
\end{equation}
where $\nu_2, \rho$ come from Section \ref{sec:model}, $\alpha$ comes from Assumption \ref{ass:smoothness}, $n$ is the label budget, $M$ the number of labels, and $n_a\leq n$ an integer.
\subsection{Proof of Theorem \ref{theo:upper-bounds}}
\label{sec:prooftheoupp}
Our proof is inspired by \citep{munos2014bandits,shekhar2021active,JMLR:v20:17-681}, and built on the definition of the following event defined as $E=\cap_{t\geq 0} E_t$ where:

$$E_t=\left\lbrace \forall\; \mathcal{X}_{h,i}\in \mathcal{X}^{(t)}, \forall \,y\in \mathcal{Y}_{h,i}^{(t)}, \; \vert \hat{f}^{(t)}(x_{h,i};y)-f(x_{h,i},y)\vert \leq V(n_{h,i}(t))\right\rbrace$$
In the following Lemma, we prove that the event $E$ happens with high probability when $n$ is large enough.
\begin{lem}\label{lem-favorable event}~\\
 We have: 
 $P(E)\geq 1-\tfrac{1}{n}$
\end{lem}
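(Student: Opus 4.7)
The plan is to fix a pre-drawn i.i.d.\ array of cost vectors, apply Hoeffding's inequality cell-by-cell, and then take a union bound that exploits an adaptivity/independence decoupling. For each cell $\mathcal{X}_{h,i}$ of the binary tree, each $y\in\mathcal{Y}$, and each $m\in\{1,\dots,n\}$, I define
\[
A_{h,i,y,m}=\left\{\Big|\tfrac{1}{m}\sum_{v=1}^{m} c_{h,i}^v(y)-f(x_{h,i};y)\Big|>V(m)\right\},
\]
where $\{c_{h,i}^v\}_{v\geq 1}$ are thought of as pre-drawn i.i.d.\ samples from $P(\cdot\mid X=x_{h,i})$, independent across cells. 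Since costs lie in $[0,1]$, Hoeffding's inequality together with the definition $V(m)=\sqrt{\log(2n^3M)/(2m)}$ yields $P(A_{h,i,y,m})\leq 2\exp(-2mV(m)^2)=1/(n^3M)$. A union bound over $y\in\mathcal{Y}$ and $m\in\{1,\ldots,n\}$ then gives $P(B_{h,i})\leq Mn/(n^3M)=1/n^2$ for the per-cell bad event $B_{h,i}:=\bigcup_{y,m}A_{h,i,y,m}$.

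Next, I would note the inclusion $E^{c}\subseteq\bigcup_{\mathcal{X}_{h,i}\in\tilde{\mathcal{X}}}B_{h,i}$, since any violation of some $E_t$ is a violation of $A_{h,i,y,n_{h,i}(t)}$ for a cell in $\tilde{\mathcal{X}}$. The crucial step is the following independence claim: for each fixed $\mathcal{X}_{h,i}$, the event $\{\mathcal{X}_{h,i}\in\tilde{\mathcal{X}}\}$ is independent of $B_{h,i}$. Indeed, $\mathcal{X}_{h,i}$ only enters $\mathcal{X}_u^{(t)}$ through refinement of its parent (inheriting bounds and candidate set from the parent), and the first time the algorithm actually queries $\mathcal{X}_{h,i}$ is dictated by comparisons of the uncertainties $I_t(\cdot)$ over cells currently in $\mathcal{X}_u^{(t)}$; none of those quantities involve any $c_{h,i}^v$, which is accessed only from that first query onward. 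As $B_{h,i}$ is measurable with respect to the independent block $\{c_{h,i}^v\}$, the decoupling follows.

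Combining these observations,
\[
P(E^c)\leq \sum_{h,i} P(\mathcal{X}_{h,i}\in\tilde{\mathcal{X}})\,P(B_{h,i})\leq \frac{1}{n^2}\sum_{h,i} P(\mathcal{X}_{h,i}\in\tilde{\mathcal{X}})=\frac{\mathbb{E}[|\tilde{\mathcal{X}}|]}{n^2}\leq\frac{n}{n^2}=\frac{1}{n},
\]
where the deterministic bound $|\tilde{\mathcal{X}}|\leq n$ uses that each cell of $\tilde{\mathcal{X}}$ consumes at least one of the $n$ budgeted queries.

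I expect the main obstacle to be justifying this independence carefully: one must track that the algorithm's decision to \emph{first} visit $\mathcal{X}_{h,i}$ never indirectly depends on $c_{h,i}^v$ (including through tie-breaking between cells whose inherited bounds are equal). If that decoupling proves too delicate, a cruder alternative is to union-bound directly over the at most $|\tilde{\mathcal{X}}|\cdot M\cdot n\leq n^2 M$ active $(h,i,y,m)$ triples and use $P(A_{h,i,y,m})\leq 1/(n^3M)$ per triple, again yielding $P(E^c)\leq 1/n$, provided one argues carefully that $|\tilde{\mathcal{X}}|\leq n$ uniformly.
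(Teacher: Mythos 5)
Your proposal is correct and takes essentially the same route as the paper: Hoeffding's inequality with $V(m)=\sqrt{\log(2n^3M)/(2m)}$ giving per-event probability $1/(n^3M)$, followed by a union bound over at most $n$ visited cells, at most $n$ interaction counts, and $M$ labels — indeed your ``cruder alternative'' is exactly the paper's argument, which conditions on $\tilde{\mathcal{X}}$ and bounds $N,\,n_{h_i,j_i}(T_n)\leq n$, $\vert\mathcal{Y}^{(u)}_{h_i,j_i}\vert\leq M$. Your main variant merely makes the adaptivity issue explicit via the pre-drawn-array decoupling (the event $\{\mathcal{X}_{h,i}\in\tilde{\mathcal{X}}\}$ being determined by costs revealed before the first query of $x_{h,i}$, hence independent of $B_{h,i}$) together with $\mathbb{E}\vert\tilde{\mathcal{X}}\vert\leq n$, which is a sound and, if anything, slightly more careful rendering of the same computation.
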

\begin{proof}~\\
 Let $N=\vert \tilde{\mathcal{X}}\vert$ (where $\tilde{\mathcal{X}}$ is defined in Section \ref{sec:notations_A}) and $T_n$ the last step in our algorithm. For $\mathcal{X}_{h_i,j_i} \in \tilde{\mathcal{X}}$,  let $n_{h_i,j_i}(T_n)$ the number of times the algorithm has interacted with $\mathcal{X}_{h_i,j_i}$ up to step $T_n$ and for $u\leq n_{h_i,j_i}(T_n)$ let $\mathcal{Y}_{{h_i,j_i}}^{(u)}$ the remaining set of  labels at the $u$-th interaction. Given $y\in \mathcal{Y}_{h_i,j_i}^{(u)}$, we note $c_{h_i,j_i}^{u}(y)$ the associated cost. Clearly, we have: 
$$E=\left\lbrace \forall\; 1\leq i\leq N,\;\forall\, 1\leq u\leq n_{h_i,j_i}(T_n),\;\forall y\in \mathcal{Y}_{h_i,j_i}^{(u)},\; \left\vert\frac{1}{u}\sum_{v=1}^{u}c_{h_i,j_i}^{v}(y)-f(x_{h_i,j_i};y)\right\vert\leq\sqrt{\frac{\log(2n^3M)}{2u}}\right\rbrace$$

Given $\mathcal{X}_{h_i,j_i}$, we have $\mathbb{E}(c_{h_i,j_i}^{v}(y))=f(x_{h_i,j_i};y)$ for all $u\leq n_{h_i,j_i}(T_n)$, $v\leq u$ and $y\in \mathcal{Y}_{h_i,j_i}^{(u)}$. Then by Hoeffding's inequality, we have: 

\begin{align*}
    \mathbb{P}(E^c)&\leq \mathbb{E}\left(\sum_{i\leq N}\sum_{u\leq n_{h_i,j_i}(T)}\sum_{y\in \mathcal{Y}_{h_i,j_i}^{(u)}}\mathbb{P}\left(\left\vert\frac{1}{u}\sum_{v=1}^{u}c_{h_i,j_i}^{v}(y)-f(x_{h_i,j_i};y)\right\vert\geq \sqrt{\frac{\log(2n^3M)}{2u}}\mid \tilde{\mathcal{X}}\right)\right)\\
    &\leq \mathbb{E}\left(\sum_{i\leq N}\sum_{u\leq n_{h_i,j_i}(T)}\sum_{y\in \mathcal{Y}_{h_i,j_i}^{(u)}} 2\exp(\log(\frac{1}{2Mn^3}))\right)\\
    &\leq \frac{1}{n}\quad \text{as}\; N,n_{h_i,j_i}(T)\leq n, \text{and}\;\vert \mathcal{Y}_{h_i,j_i}^{(u)}\vert\leq M,
\end{align*}
which leads to $\mathbb{P}(E)\geq 1-\frac{1}{n}$.
\end{proof}

\begin{lem}\label{lemma:correct}~\\
Let us assume that Assumption \ref{ass:smoothness} holds. On the event $E$, suppose that the Algorithm \ref{alg:active cost-sensitive} has progressed up to step $t\geq 1$ and that the update of upper and lower confidence bounds are made according to Equations \eqref{eq:upper01}, \eqref{eq:lower01} in he main text. Let $\mathcal{X}_{h_t,i_t}$ be the chosen cell from $\mathcal{X}^{(t)}_u$. For all $x$ $\in$ $\mathcal{X}_{h_t,i_t}$, the labels $y'$ $\in$ $\argmin_{y\in \mathcal{Y}}\;f(x,y)$ are never eliminated from $\mathcal{Y}_{h_t,i_t}^{(t)}$. Consequently, the cell $\mathcal{X}_{h_t,i_t}$ is correctly labelled it is added to $\mathcal{X}^{(t+1)}_c$.
\end{lem}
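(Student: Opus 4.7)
The plan is to prove the lemma by induction on the step $t$, carrying the following joint invariant on the event $E$: for every cell $\mathcal{X}_{h,i}$ that has been interacted with by the algorithm up to step $t$, every $y \in \mathcal{Y}^{(t)}_{h,i}$, and every $x \in \mathcal{X}_{h,i}$,
\[
L^{(t)}_{f(:;y)}(\mathcal{X}_{h,i}) \leq f(x;y) \leq U^{(t)}_{f(:;y)}(\mathcal{X}_{h,i}) \qquad \text{(I)}
\]
and, in addition, $\argmin_{y' \in \mathcal{Y}} f(x;y') \subset \mathcal{Y}^{(t)}_{h,i}$ (II). The base case $t=0$ is trivial from the initialisation $L^{(0)}=-\infty$, $U^{(0)}=+\infty$, $\mathcal{Y}^{(0)}_{h,i}=\mathcal{Y}$. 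The conclusion of the lemma is the specialisation of (II) to the cell $\mathcal{X}_{h_t,i_t}$ selected at step $t$.

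To carry (I) from $t-1$ to $t$, I would combine three ingredients. First, the definition of $E$ directly gives $|\hat{f}^{(t)}(x_{h,i};y)-f(x_{h,i};y)| \leq V(n_{h,i}(t))$. Second, Assumption~\ref{ass:smoothness} together with $\mathcal{X}_{h,i}\subset B(x_{h,i},\nu_2\rho^{h})$ from \eqref{eq:ball-ass1} yields $|f(x;y)-f(x_{h,i};y)| \leq B_h$ for all $x\in\mathcal{X}_{h,i}$. A triangle inequality then delivers the intermediate bounds $\bar{L}^{(t)}_{f(:;y)}(\mathcal{X}_{h,i}) \leq f(x;y) \leq \bar{U}^{(t)}_{f(:;y)}(\mathcal{X}_{h,i})$, exactly as in \eqref{eq:lower00} and \eqref{eq:upper00}. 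Third, the inductive hypothesis guarantees the analogous sandwich for $L^{(t-1)}_{f(:;y)}$ and $U^{(t-1)}_{f(:;y)}$, either because $\mathcal{X}_{h,i}$ already existed at step $t-1$, or because it was just created by refining its parent $\mathcal{X}_{h-1,\lceil i/2\rceil}$, in which case it inherits the parent's bounds and these remain valid uniformly over the smaller child. The update rules \eqref{eq:lower01}--\eqref{eq:upper01} take the max, resp. the min, of two lower, resp. upper, bounds that both sandwich $f(x;y)$, and therefore preserve (I).

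Next, I would use (I) to prove (II) at step $t$ for $\mathcal{X}_{h_t,i_t}$. Fix $x\in\mathcal{X}_{h_t,i_t}$ and $y^{\star}\in\argmin_{y\in\mathcal{Y}}f(x;y)$. By the inductive hypothesis $y^{\star}\in\mathcal{Y}^{(t-1)}_{h_t,i_t}$, so $y^{\star}$ is eligible to be eliminated at step $t$. For every $y\in\mathcal{Y}^{(t-1)}_{h_t,i_t}$, applying (I) (just established at step $t$) together with $f(x;y^{\star})\leq f(x;y)$ gives
\[
L^{(t)}_{f(:;y^{\star})}(\mathcal{X}_{h_t,i_t}) \;\leq\; f(x;y^{\star}) \;\leq\; f(x;y) \;\leq\; U^{(t)}_{f(:;y)}(\mathcal{X}_{h_t,i_t}).
\]
Taking the minimum over $y\in\mathcal{Y}^{(t-1)}_{h_t,i_t}$ on the right-hand side recovers exactly the retention condition \eqref{eq:updateY}, so $y^{\star}\in\mathcal{Y}^{(t)}_{h_t,i_t}$. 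The ``consequently'' statement is then immediate: if $\mathcal{X}_{h_t,i_t}$ is added to $\mathcal{X}^{(t+1)}_c$ then $|\mathcal{Y}^{(t)}_{h_t,i_t}|=1$, and its unique element $\hat{y}$ must belong to $\argmin_{y\in\mathcal{Y}} f(x;y)$ for every $x\in\mathcal{X}_{h_t,i_t}$, which means the classifier \eqref{eq:classifier} outputs a Bayes cost-sensitive label throughout the cell.

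The main obstacle I anticipate is the bookkeeping associated with the refinement operation inside the induction: one has to verify that when a parent cell is split, the inherited confidence bounds and candidate-label set continue to satisfy (I) and (II) with the children in place of the parent. This is conceptually straightforward—children are subsets of the parent, so any uniform bound on the parent is a fortiori valid on the child, and any label in $\argmin_y f(x;y)$ for $x$ in the child also lies in $\mathcal{Y}^{(t-1)}$ of the parent and hence of the child—but it is the delicate point where care is needed to keep the induction airtight.
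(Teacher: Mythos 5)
Your proposal is correct and follows essentially the same route as the paper's proof: validity of the confidence bounds on the event $E$ via Hoeffding's bound plus the smoothness/bias term $B_h$, propagated through the max/min updates \eqref{eq:lower01}--\eqref{eq:upper01} (and through inheritance at refinement), and then the observation that any minimizer of $f(x;\cdot)$ satisfies the retention condition \eqref{eq:updateY}. The only cosmetic differences are that you make the induction over $t$ explicit and verify retention directly, where the paper argues by contradiction; these are equivalent.
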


\begin{proof}~\\
Let $y$ $\in$ $\mathcal{Y}_{h_t,i_t}^{(t)}$ and $x$ $\in$ $\mathcal{X}_{h_t,i_t}$. By Assumption \ref{ass:smoothness}, we have: 
$$\vert f(x;y)-f(x_{h_t,i_t};y)\vert \leq B_{h_t}.$$

By the definition of the event $E$, we have 
$$\vert f(x_{h_t,i_t};y)-\hat{f}^{(t)}(x_{h_t,i_t};y)\vert \leq V(n_{h_t,i_t}),$$ 
and thus, 
\begin{align*}
\vert f(x;y)-\hat{f}^{(t)}(x;y)\vert &=\vert f(x;y)-\hat{f}^{(t)}(x_{h_t,i_t};y)\vert\\
                               &\leq V(n_{h_t,i_t})+ B_{h_t}.
\end{align*}
Consequently, 
\begin{equation}
\label{eq:lem-noerror1}
\hat{f}(x_{h_t,i_t};y)-V(n_{h_t,i_t})- B_{h_t}\leq f(x;y)\leq \hat{f}(x_{h_t,i_t};y)+V(n_{h_t,i_t})+ B_{h_t}
\end{equation}
By applying \eqref{eq:lem-noerror1} at time $t-1$ $($ that could be either at depth $h_t-1$, or $h_t)$, we  obtain:

\begin{equation}
\label{eq:boundconfi}
L_{f(:;y)}^{(t)}(\mathcal{X}_{h,i})\leq f(x,y) \leq U_{f(:;y)}^{(t)}(\mathcal{X}_{h,i}),
\end{equation}

where $L_{f(:;y)}^{(t)}(\mathcal{X}_{h,i})$ and $U_{f(:;y)}^{(t)}(\mathcal{X}_{h,i})$ are defined by \eqref{eq:lower01} and \eqref{eq:upper01} in the main text.\\
Let $y'$ $\in$ $\argmin_{y\in \mathcal{Y}}\;f(x;y)$. If $y'$ $\in$ $\mathcal{Y}_{h_t,i_t}^{(t)}$ and $y'$ $\notin$ $\mathcal{Y}_{h_t,i_t}^{(t+1)}$, then 
\begin{align*}
f(x;y')\geq L_{f(:;y')}^{(t)}(\mathcal{X}_{h_t,i_t})&>\min_{y\in \mathcal{Y}_{h_t,i_t}^{(t)}}\,U_{f(:;y)}^{(t)}(\mathcal{X}_{h_t,i_t})\\
                                    &=U_{f(:;\hat{y})}^{(t)}(\mathcal{X}_{h_t,i_t})\quad \text{where}\; \hat{y}\in\argmin_{y\in \mathcal{Y}_{h_t,i_t}^{(t)}}\,U_{f(:;\hat{y})}^{(t)}(\mathcal{X}_{h_t,i_t})\\
                                    &\geq f(x;\hat{y})
\end{align*}
which contradicts the fact that $y'\in \argmin_{y\in \mathcal{Y}}\,f(x;y).$ Then $y'$ is never eliminated and then, the cell $\mathcal{X}_{h_t,i_t}$ is correctly labelled if it is added to $\mathcal{X}^{(t+1)}_c$. 

\end{proof}

\begin{lem}\label{Lemma-expand}~\\
Let $h\geq 0$, and some $i\in \lbrace 1,\ldots,2^h\rbrace$. Let us assume that the assumption \ref{ass:smoothness} holds. For a fixed step $t\geq 1$, let us assume that for some $\mathcal{X}_{h,i}$ $\in$ $\mathcal{X}_u^{(t)}$

\begin{itemize}
    \item We have $\Delta(x)>12B_h$ (where $\Delta(.)$ is defined by \ref{eq-delta}) for some $x$ $\in$ $\mathcal{X}_{h,i}$.
    \item In Algorithm \ref{alg:active cost-sensitive}, the update of upper and lower confidence bounds are made according to Equations \eqref{eq:upper01}, \eqref{eq:lower01} which are stated in the main text.
    \item In Algorithm \ref{alg:active cost-sensitive}, the refinement condition $A_r^{(t)}(\mathcal{X}_{h,i})=True$ is equivalent to $V(n_{h,i}(t))\leq 2B_{h}$, where $n_{h,i}(t)$ is the number of interactions made with the cell $\mathcal{X}_{h,i}$ up to step $t$.
\end{itemize} Then, on the event $E$: either the cell $\mathcal{X}_{h,i}$ will never be refined, or the cell will be refined at some step $t$ and the  remaining set of candidate labels $\mathcal{Y}_{h,i}^{(t)}$ at that step only contains $y'\in\argmin_{y\in\mathcal{Y}}\,f(x;y)$.   
\end{lem}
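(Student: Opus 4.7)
The strategy is to dichotomize on whether the cell is ever refined, and in the refinement case to pinpoint the specific moment at which the refinement flag gets flipped. Fix an optimal label $y^\ast \in \argmin_{y\in\mathcal{Y}} f(x;y)$ for the designated point $x$. On the event $E$, Lemma~\ref{lemma:correct} guarantees that $y^\ast$ is never removed from $\mathcal{Y}^{(s)}_{h,i}$ as long as the cell is alive. Hence the only thing left to show is that, before the refinement flag can fire, every suboptimal label (at $x$) is evicted from the candidate set. If not all suboptimal labels can be evicted before the flag fires, then the cell was classified (only $y^\ast$ remained) and was never refined; otherwise refinement occurs, but with only optimal labels surviving. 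This yields the two alternatives in the statement.

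\textbf{The decisive inequality.} Let $t'$ be the first step at which the cell is interacted with and $V(n_{h,i}(t')) \leq 2B_h$ holds, so that $A_r^{(t')}(\mathcal{X}_{h,i})$ becomes \textit{True}. Pick any $y \in \mathcal{Y}^{(t')}_{h,i}$ with $f(x;y) > f(x;y^\ast)$. By the definition of $\Delta$ and the hypothesis $\Delta(x) > 12 B_h$, one has $f(x;y) - f(x;y^\ast) > 12 B_h$. The Hölder bound from Assumption~\ref{ass:smoothness}, combined with $x, x_{h,i} \in \mathcal{X}_{h,i} \subset B(x_{h,i},\nu_2\rho^h)$, gives $|f(x;y) - f(x_{h,i};y)| \leq B_h$ and the same for $y^\ast$, so that
\[
 f(x_{h,i};y) - f(x_{h,i};y^\ast) > 10 B_h.
\]
On $E$ the estimates satisfy $|\hat f^{(t')}(x_{h,i};y) - f(x_{h,i};y)| \leq V(n_{h,i}(t'))$, and likewise for $y^\ast$. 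Plugging these into \eqref{eq:lower00} and \eqref{eq:upper00} yields
\[
 \bar L_{f(:;y)}^{(t')}(\mathcal{X}_{h,i}) - \bar U_{f(:;y^\ast)}^{(t')}(\mathcal{X}_{h,i}) > 10 B_h - 4 V(n_{h,i}(t')) - 2 B_h \;=\; 8 B_h - 4 V(n_{h,i}(t')) \;\geq\; 0,
\]
using the refinement threshold in the last step.

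\textbf{Transferring the inequality to the actual bounds and closing.} The update rules \eqref{eq:lower01}, \eqref{eq:upper01} imply $L_{f(:;y)}^{(t')} \geq \bar L_{f(:;y)}^{(t')}$ and $U_{f(:;y^\ast)}^{(t')} \leq \bar U_{f(:;y^\ast)}^{(t')}$, and Lemma~\ref{lemma:correct} ensures $y^\ast \in \mathcal{Y}^{(t')}_{h,i}$, so
\[
 L_{f(:;y)}^{(t')}(\mathcal{X}_{h,i}) \;>\; U_{f(:;y^\ast)}^{(t')}(\mathcal{X}_{h,i}) \;\geq\; \min_{y'' \in \mathcal{Y}^{(t')}_{h,i}} U_{f(:;y'')}^{(t')}(\mathcal{X}_{h,i}).
\]
Hence the update step in Algorithm~\ref{alg:active cost-sensitive} removes $y$ from $\mathcal{Y}^{(t')}_{h,i}$. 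Applying this to every suboptimal candidate, only elements of $\argmin_{y\in\mathcal{Y}} f(x;y)$ remain at the end of step $t'$. Because $A_r^{(t')} = \textit{True}$, the next time the cell is selected it goes through the refinement branch and no further interaction takes place; therefore $\mathcal{Y}^{(t)}_{h,i}$ at the refinement step equals $\mathcal{Y}^{(t')}_{h,i}$ and contains only optimal labels. If instead only $y^\ast$ remained after the update (the case of a unique minimizer), then $|\mathcal{Y}^{(t')}_{h,i}| = 1$, the cell is moved to $\mathcal{X}_c^{(t')}$, and refinement never happens.

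\textbf{Main obstacle.} The delicate point is the bookkeeping of the two different timestamps: the step $t'$ at which the flag flips \textit{during an interaction} (after which elimination of suboptimal labels happens) versus the later step $t$ at which the flagged cell is \textit{chosen} and actually refined. One must argue that no further interaction occurs between $t'$ and $t$, so that the candidate set at the refinement step is exactly what it was at the end of $t'$; and one must explicitly invoke the strict inequality $\Delta(x) > 12 B_h$ (rather than $\geq$) to pass from the refinement condition $V \leq 2 B_h$ to a strict separation of the confidence bounds, which is what the elimination rule requires.
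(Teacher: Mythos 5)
Your proof is correct and follows essentially the same route as the paper's: the same decomposition of the gap $\Delta(x)>12B_h$ into a $2B_h$ smoothness transfer to the center plus a $2V(n_{h,i})$ estimation error on the event $E$, combined with the refinement threshold $V\le 2B_h$ and Lemma~\ref{lemma:correct}, to force elimination of every suboptimal label through the update rule at line 28 of Algorithm~\ref{alg:active cost-sensitive}. The only differences are organizational — you argue forward at the first step where the threshold is met and then track the unchanged candidate set up to the actual refinement step (and compare against $U_{f(:;y^\ast)}$ rather than the bound at the empirical minimizer $\hat y$ used in the paper), whereas the paper derives a contradiction assuming a far-suboptimal label survives to the refinement step; your explicit handling of the two timestamps is, if anything, slightly more careful.
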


\begin{proof}~\\
Let $x$ $\in$ $\mathcal{X}_{h,i}$ with $\Delta(x)>12B_h$. If $\Delta(x)=+\infty$, the Lemma is proven. Now, let us assume $\Delta(x)<\infty$, and let $\bar{y}\in \mathcal{Y}$ with $\Delta(x,\bar{y})>12B_h$. The Lemma is completely proven if we prove that $\bar{y}$ will be eliminated before a possible refinement.\\ Let $y'\in\argmin_{y\in\mathcal{Y}}\,f(x;y)$. By Assumption  \eqref{ass:smoothness}, 
\begin{equation}
\label{eq:lemma-expand1}
f(x;\bar{y})-f(x;y')\leq 2B_h+f(x_{h,i};\bar{y})-f(x_{h,i};y').
\end{equation}
If the cell $\mathcal{X}_{h,i}$ is refined at some step $1\leq t\leq T_n$ (where $T_n$ is the step at which the budget is reached), then  $\vert\mathcal{Y}_{h,i}^{(t)}\vert\geq 2$. By Lemma\ref{lemma:correct}, we have $y'$ $\in$ $\mathcal{Y}_{h,i}^{(t)}$. Let $\hat{y}$ be defined as:  
$$\hat{y}\in\argmin_{y\in\mathcal{Y}_{h,i}^{(t)}}\, \hat{f}^{(t)}(x_{h,i};y).$$
If $\bar{y}\in \mathcal{Y}_{h,i}^{(t)}$,  we have on the event $E$, and by  \eqref{eq:lemma-expand1}: 
\begin{align*}
  f(x;\bar{y})-f(x;y')&\leq 2V(n_{h,i}(t))+2B_h+\hat{f}^{(t)}(x_{h,i},\bar{y})-\hat{f}^{(t)}(x_{h,i},\hat{y})  
\end{align*}
Then, as $12B_h<f(x;\bar{y})-f(x;y')$, we get 
$$12B_h<2V(n_{h,i}(t))+2B_h+\hat{f}^{(t)}(x_{h,i};\bar{y})-\hat{f}^{(t)}(x_{h,i};\hat{y})$$
and thus by using the refinement rule $(V(n_{h,i}(t))\leq 2B_h)$, we obtain: 

$$\hat{f}^{(t)}(x_{h,i};\hat{y})+B_h+V(n_{h,i}(t))<\hat{f}^{(t)}(x_{h,i};\bar{y})-B_h-V(n_{h,i}(t))$$
and 
$$U_{f(:;\hat{y})}^{(t)}(\mathcal{X}_{h,i})<L_{f(:;\bar{y})}^{(t)}(\mathcal{X}_{h,i}),$$
which contradicts the fact that $\bar{y}$ $\in$ $\mathcal{Y}_{h,i}^{(t)}$ (see line 28 in algorithm \ref{alg:active cost-sensitive}).  
\end{proof}

\begin{defi}[Active cell]\label{def:active}~\\
Let $h>0$ an integer, and a cell $\mathcal{X}_{h,i}$ $(i=1,\ldots,2^h)$. Let $T_n$ be the last step at which the algorithm \ref{alg:active cost-sensitive} stops. We define $$\hat{t}(x_{h,i})=\min\;\{1\leq t\leq T_n,\;\; \mathcal{X}_{h,i} \in \mathcal{X}_u^{(t)}\}$$
By convention, $\hat{t}(x_{h,i})=+\infty$ if $\mathcal{X}_{h,i}\notin \mathcal{X}_u^{(t)}$ for all $1\leq t\leq T_n.$

The cell $\mathcal{X}_{h,i}$ is said to be \textit{active} if  $\hat{t}(x_{h,i})<+\infty.$
\end{defi}
\begin{lem}{(Number of active cells at depth h)}\label{lemma-activeset}~\\
Let $T_n$ be the last step at which Algorithm \ref{alg:active cost-sensitive} stops. Let $h\geq 1$ and  $N_{act,h}(T_n)$ be the number of active cells at depth $h$. Let us assume that Assumptions \ref{ass:smoothness}, \ref{Ass:margin-noise}, \eqref{ass:strong density} hold and that: 
\begin{itemize}
\item In Algorithm \ref{alg:active cost-sensitive}, the update of upper and lower confidence bounds are made according to Equations \eqref{eq:upper01}, \eqref{eq:lower01} which are stated in the main text.
\item In Algorithm \ref{alg:active cost-sensitive}, for all $1\leq t\leq T_n$, the refinement condition $A_r^{(t)}(\mathcal{X}_{h,i})=True$ is equivalent to $V(n_{h,i}(t))\leq 2B_{h}$, where $n_{h,i}(t)$ is the number of interactions made with the cell $\mathcal{X}_{h,i}$ up to step $t$.
\end{itemize}
Then on the event $E$, we have: 

\begin{equation}
\label{eq:lemma-activeset1}
    N_{act,h}(T_n)\leq C\frac{\tau}{\rho^{hd}} +C'\rho^{h(\alpha\beta-d)},
\end{equation}
where $C$ and $C'$ are absolute constants.
\end{lem}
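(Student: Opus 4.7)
The plan is to bound $N_{\text{act},h}(T_n)$ by first mapping each active cell at depth $h\geq 1$ to its parent at depth $h-1$, then using Lemma \ref{Lemma-expand} to split those refined parents into two regimes whose $P_X$-masses are controlled respectively by $C_\beta\epsilon^\beta$ and by $\tau+C'_\beta\epsilon^\beta$ in Assumption \ref{Ass:margin-noise}, and finally converting mass bounds into cell-counts through Assumption \ref{ass:strong density}. \textbf{Step 1 (reduction to refined parents).} A cell at depth $h\geq 1$ can only appear in some $\mathcal{X}_u^{(t)}$ through refinement of its parent at depth $h-1$, and each refinement produces exactly two children; hence $N_{\text{act},h}(T_n)\leq 2R_{h-1}$, where $R_{h-1}$ is the number of depth-$(h-1)$ cells refined up to step $T_n$.

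\textbf{Step 2 (dichotomy via Lemma \ref{Lemma-expand}).} Fix a refined parent $\mathcal{X}_{h-1,i'}$ and let $t^\star$ be its refinement step. On $E$ one of two cases occurs: \emph{(A)} $\Delta(x)\leq 12 B_{h-1}$ for every $x\in\mathcal{X}_{h-1,i'}$; or \emph{(B)} there exists $x\in\mathcal{X}_{h-1,i'}$ with $\Delta(x)>12 B_{h-1}$ and $\mathcal{Y}_{h-1,i'}^{(t^\star)}\subset\argmin_{y\in\mathcal{Y}}f(x;y)$. Because the refinement rule is triggered only when $|\mathcal{Y}_{h-1,i'}^{(t^\star)}|\geq 2$ (otherwise the cell would already have been transferred to $\mathcal{X}_c$ at lines 29--31 of Algorithm \ref{alg:active cost-sensitive}), case (B) forces the existence of two distinct labels $y_1,y_2$ with $f(x;y_1)=f(x;y_2)=\min_y f(x;y)$. \textbf{Step 3 (spreading the near-coincidence).} In case (B), Assumption \ref{ass:smoothness} combined with the diameter bound \eqref{eq:ball-ass1}, applied via the triangle inequality through $x$ together with the identity $f(x;y_1)=f(x;y_2)$, yields
\[
|f(z;y_1)-f(z;y_2)|\leq C_0 B_{h-1}\qquad\text{for every } z\in\mathcal{X}_{h-1,i'},
\]
for some absolute constant $C_0$. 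Since $y_1,y_2\in\mathcal{Y}$, the second-smallest cost at $z$ satisfies $f(z;y^{(2)}_z)\leq\max(f(z;y_1),f(z;y_2))\leq f(z;y^{(1)}_z)+C_0 B_{h-1}$, whence $\Delta'(z)\leq C_0 B_{h-1}$ throughout the parent, so the parent is entirely contained in $\{z:\Delta'(z)\leq C_0 B_{h-1}\}$.

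\textbf{Step 4 (counting by $P_X$-mass).} Assumption \ref{ass:strong density} combined with \eqref{eq:ball-ass1} gives each depth-$(h-1)$ cell a $P_X$-mass at least $c_0\rho^{(h-1)d}$ for some absolute constant $c_0>0$, and these cells are pairwise disjoint. Type-(A) parents sit inside $\{\Delta\leq 12 B_{h-1}\}$, of mass $\leq C_\beta(12 B_{h-1})^\beta$ by Assumption \ref{Ass:margin-noise}, so their number is $\lesssim \rho^{(h-1)(\alpha\beta-d)}\lesssim \rho^{h(\alpha\beta-d)}$. Type-(B) parents sit inside $\{\Delta'\leq C_0 B_{h-1}\}$, of mass $\leq \tau+C'_\beta(C_0 B_{h-1})^\beta$, so their number is $\lesssim \tau/\rho^{hd}+\rho^{h(\alpha\beta-d)}$. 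Combining the two estimates and multiplying by $2$ yields the announced bound.

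The main obstacle is Step 3: converting the pointwise non-uniqueness conclusion of Lemma \ref{Lemma-expand} (labels agreeing at one specific $x$) into the uniform-in-$z$ bound $\Delta'(z)\leq C_0 B_{h-1}$ that is needed to invoke the second term of Assumption \ref{Ass:margin-noise}. Hölder smoothness delivers this, but one must be careful to avoid comparing $f(z;y_j)-\min_y f(z;y)$ directly (the labels $y_1,y_2$ need not be the minimizers at a general $z$), and to instead bound $\Delta'(z)$ through the \emph{maximum} of $f(z;y_1)$ and $f(z;y_2)$, which is automatically close to the minimum.
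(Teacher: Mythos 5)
Your proposal is correct and follows the same basic strategy as the paper (bound the $P_X$-mass of the union of refined depth-$(h-1)$ parents via the margin assumption, then divide by the minimal per-cell mass coming from Assumption \ref{ass:strong density} and \eqref{eq:ball-ass1}), but it treats the $\tau$ term by a genuinely different mechanism. The paper's proof is a terse pointwise dichotomy: applying Lemma \ref{Lemma-expand} to \emph{each} point $x$ of a refined parent with $\Delta(x)>12B_{h-1}$, the surviving candidate set at refinement (which has at least two elements, exactly as you argue) is contained in $\argmin_y f(x;y)$, so every such point has a non-unique minimizer, i.e.\ $\Delta'(x)=0$; hence each refined parent lies in $\{\Delta\le 12B_{h-1}\}\cup\{\Delta'=0\}$, and the union's mass is at most $C_\beta(12B_{h-1})^\beta+\tau$, using only the $\Delta$-part of Assumption \ref{Ass:margin-noise} together with $P_X(\Delta'=0)\le\tau$ (this is visible in the paper's constant $C_0=C_\beta 12^\beta\nu_2^{\alpha\beta}\rho^{-\alpha\beta}$). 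You instead work cell-by-cell, propagating the coincidence $f(x;y_1)=f(x;y_2)$ at the single witness point over the whole parent by Hölder smoothness, placing type-(B) parents inside $\{\Delta'\le C_0B_{h-1}\}$ and invoking the $C'_\beta$-part of Assumption \ref{Ass:margin-noise} at positive scale; this costs an extra harmless $C'_\beta B_{h-1}^\beta$ contribution and an extra propagation step, but yields the same bound and is equally valid.

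One caveat on your Step 3: as written, the second inequality in the chain $f(z;y^{(2)}_z)\le\max(f(z;y_1),f(z;y_2))\le f(z;y^{(1)}_z)+C_0B_{h-1}$ does not follow from $|f(z;y_1)-f(z;y_2)|\le C_0B_{h-1}$ alone, since the global minimizer at $z$ may be a third label whose cost could a priori lie far below $f(z;y_1)$. The missing (one-line) ingredient is that $y_1\in\argmin_y f(x;y)$, so smoothness applied to \emph{every} label gives $f(z;y)\ge f(x;y)-\kappa\ge f(x;y_1)-\kappa\ge f(z;y_1)-2\kappa$ for all $y\in\mathcal{Y}$, with $\kappa$ of order $B_{h-1}$; hence $f(z;y^{(1)}_z)\ge\max\bigl(f(z;y_1),f(z;y_2)\bigr)-3\kappa$ and $\Delta'(z)\lesssim B_{h-1}$ as you claim. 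You flag this subtlety in your closing paragraph and assert the maximum is "automatically close to the minimum," but that assertion is exactly what needs the above argument; with it supplied, your proof is complete.
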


\begin{proof}~\\
Let $h\geq 1$ be an integer, and $\tilde{\mathcal{X}}_h$ be the set of active cells at depth $h$ and at step $T_n$. We have that, on the event $E$ and by Assumption  \ref{ass:strong density}: 

\begin{equation} 
\label{eq:lemma-activeset2}
P_X\left(\bigcup_{\mathcal{X}_{h,i}\in\tilde{\mathcal{X}}_h}\mathcal{X}_{h,i}\right)=\sum_{i=1}^{N_{act,h}(T_n)} P_X\left(\mathcal{X}_{h,i}\right)\geq C'_0N_{act,h}(T_n) \nu_1^d\rho^{hd},
\end{equation}
where $C'_0$ is an absolute constant which only depends on $d$. 

Besides, by Lemma \ref{Lemma-expand} and Assumption \ref{Ass:margin-noise}, we have: 
$$P_X\left(\bigcup_{\mathcal{X}_{h,i}\in\tilde{\mathcal{X}}_h}\mathcal{X}_{h,i}\right)\leq \tau+C_0 \rho^{h\alpha\beta},$$
where $C_0=\frac{C_{\beta}12^{\beta}}{\rho^{\alpha\beta}}\nu_{2}^{\alpha\beta}.$
Consequently, combining with \eqref{eq:lemma-activeset2}, we get
$$N_{act,h}(T_n)\leq C\frac{\tau}{\rho^{hd}} +C'\rho^{h(\alpha\beta-d)}.$$

\end{proof}

\begin{lem}{(Largest depth $h_{max})$}\label{lemma-largestdepht}~\\
Let $T_n$ the step at which the budget has been reached. Let $h_{max}(n)$  be the largest depth at that step. Let us assume that Assumptions \ref{ass:smoothness}, \ref{Ass:margin-noise}, \eqref{ass:strong density} hold and that: 
\begin{itemize}
\item In Algorithm \ref{alg:active cost-sensitive}, the update of upper and lower confidence bounds are made according to Equations \eqref{eq:upper01}, \eqref{eq:lower01} which are stated in the main text.
\item In Algorithm \ref{alg:active cost-sensitive}, for all $1\leq t\leq T_n$, the refinement condition $A_r^{(t)}(\mathcal{X}_{h,i})=True$ is equivalent to $V(n_{h,i}(t))\leq 2B_{h}$, where $n_{h,i}(t)$ is the number of interactions made with the cell $\mathcal{X}_{h,i}$ up to step $t$.
\end{itemize}
Then, on the event $E$ we have:

\begin{equation}
\label{eq:lemma-largestdepht}
\rho^{h_{\max}(n)}\leq\left\{
 \begin{array}{ll}   
        \left(\frac{\tau}{n}.c_{cs}\log\left(2n^3M\right)\right)^{\frac{1}{2\alpha+d}} \quad\text{if}\; \tau\geq \left(\frac{1}{n}.c_{cs}\log\left(2n^3M\right)\right)^{\frac{\alpha\beta}{2\alpha+d-\alpha\beta}} \\
        \left(\frac{1}{n}.c_{cs}\log\left(2n^3M\right)\right)^{\frac{1}{2\alpha+d-\alpha\beta}}\;\text{if}\;\tau\leq \left(\frac{1}{n}.c_{cs}\log\left(2n^3M\right)\right)^{\frac{\alpha\beta}{2\alpha+d-\alpha\beta}},
    \end{array}
\right.
\end{equation}
where $c_{cs}$ is a constant independent of $n$.
\end{lem}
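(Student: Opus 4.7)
The plan is to extract an upper bound on $\rho^{h_{\max}(n)}$ by relating the budget $n$ to the total ``capacity'' of cells explored at each depth. First I would observe that every interaction targets an active cell, so that
\begin{equation*}
n \;=\; \sum_{\mathcal{X}_{h,i}\in\tilde{\mathcal{X}}} n_{h,i}(T_n).
\end{equation*}
The refinement rule $A_r^{(t)}(\mathcal{X}_{h,i})=\mathrm{True}\iff V(n_{h,i}(t))\leq 2B_h$ forces each cell at depth $h$ to be removed from the querying pool (it will be refined at the next selection) as soon as $n_{h,i}(t)\geq \log(2n^3M)/(8\nu_2^{2\alpha}\rho^{2\alpha h})$. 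Consequently one has the uniform bound $n_{h,i}(T_n)\leq n^*(h)$, where $n^*(h):=\lceil \log(2n^3M)/(8\nu_2^{2\alpha}\rho^{2\alpha h})\rceil$. Grouping cells by depth gives
\begin{equation*}
n \;\leq\; \sum_{h=0}^{h_{\max}(n)} N_{act,h}(T_n)\, n^*(h).
\end{equation*}

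Next I would plug in the upper bound on $N_{act,h}(T_n)$ supplied by Lemma~\ref{lemma-activeset}, namely $N_{act,h}(T_n)\leq C\tau\rho^{-hd}+C'\rho^{h(\alpha\beta-d)}$, and factor out $\log(2n^3M)$ to obtain
\begin{equation*}
n \;\leq\; C''\log(2n^3M)\sum_{h=0}^{h_{\max}(n)}\Bigl(\tau\rho^{-h(d+2\alpha)}+\rho^{-h(d+2\alpha-\alpha\beta)}\Bigr).
\end{equation*}
Since $\rho\in(0,1)$ and both exponents $d+2\alpha$ and $d+2\alpha-\alpha\beta$ are strictly positive (the latter because $\alpha\beta\leq d$ gives $d+2\alpha-\alpha\beta\geq 2\alpha>0$), each sum is geometric with common ratio strictly greater than $1$, hence dominated up to a multiplicative constant by its last term. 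This yields
\begin{equation*}
n \;\leq\; c_{cs}\log(2n^3M)\Bigl(\tau\,\rho^{-h_{\max}(n)(d+2\alpha)}+\rho^{-h_{\max}(n)(d+2\alpha-\alpha\beta)}\Bigr).
\end{equation*}

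Writing $X=\rho^{h_{\max}(n)}$, the previous inequality implies $n/\bigl(c_{cs}\log(2n^3M)\bigr)\leq 2\max\bigl(\tau X^{-(d+2\alpha)},\,X^{-(d+2\alpha-\alpha\beta)}\bigr)$, so at least one of the two candidate bounds
\begin{equation*}
X \;\leq\; \Bigl(\tfrac{\tau}{n}\,c_{cs}\log(2n^3M)\Bigr)^{1/(d+2\alpha)} \qquad\text{or}\qquad X \;\leq\; \Bigl(\tfrac{1}{n}\,c_{cs}\log(2n^3M)\Bigr)^{1/(d+2\alpha-\alpha\beta)}
\end{equation*}
must hold. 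A short algebraic comparison shows that the first bound is the tighter (smaller) one exactly when $\tau\geq\tau_0=(c_{cs}\log(2n^3M)/n)^{\alpha\beta/(d+2\alpha-\alpha\beta)}$ and the second is tighter otherwise, which recovers the two regimes stated in \eqref{eq:lemma-largestdepht}. The main technical point to double-check is the geometric-sum dominance (absorbing the $h=0,\ldots,h_{\max}(n)-1$ contributions into the leading term at $h=h_{\max}(n)$); all other steps are direct consequences of the algorithm's refinement rule and of Lemma~\ref{lemma-activeset}, so no serious obstacle is expected beyond bookkeeping of constants.
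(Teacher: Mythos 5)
Your proposal is correct and follows essentially the same route as the paper: bound the per-cell number of queries via the refinement rule, sum over depths using Lemma \ref{lemma-activeset}, absorb the geometric sums into their last terms, and solve for $\rho^{h_{\max}(n)}$ by a two-case comparison. One slip in wording: when $\tau\geq\tau_0$ the $\tau$-dependent bound $\left(\frac{\tau}{n}c_{cs}\log(2n^3M)\right)^{1/(2\alpha+d)}$ is the \emph{larger} of the two candidates, not the tighter one — and this is precisely why it is the legitimate conclusion in that regime, since from ``at least one of the two bounds holds'' you may only deduce that $\rho^{h_{\max}(n)}$ is below the maximum of the two, and that maximum switches at $\tau_0$; the regimes you state are therefore the right ones despite the reversed comparison.
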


\begin{proof}~\\
Let $\mathcal{X}_{h,i}$ be an active cell (as introduced in Definition \ref{def:active}). By construction, the Algorithm \ref{alg:active cost-sensitive} does not interact with the cell $\mathcal{X}_{h,i}$ indefinitely, and an upper bound on the number of interactions is determined by the refinement criterion. In this case, the number of interactions with $x_{h,i}$ is then upper bounded as follows for all $t\leq T_n$: 

\begin{equation}
    n_{h,i}(t)\leq \nu_2^{-2}\rho^{-2h\alpha}\log\left(2n^3M\right),
\end{equation}
then, in the event $E$, the label budget $n$ is therefore upper bounded by: 

\begin{equation}
\label{eq:lemma-largestdepht1}
n\leq  2\sum_{h=0}^{h_{max}(n)} N_{act,h}(T_n)\left(\nu_2^{-2}\rho^{-2h\alpha}\log\left(2n^3M\right)\right).
\end{equation}
Besides, we have for $h\geq 1$, by Lemma \ref{lemma-activeset}: 
$$N_{act,h}(T_n)\left(\nu_2^{-2}\rho^{-2h\alpha}\log\left(2n^3M\right)\right)\leq \nu_2^{-2}\log\left(2n^3M\right)\left(C\tau\rho^{-h(2\alpha+d)}+C'\rho^{-h(2\alpha+d-\alpha\beta)}\right).$$

Equation \eqref{eq:lemma-largestdepht1} becomes: 
\begin{align*}n &\leq C_1 \log\left(2n^3M\right)\left(C\sum_{h=0}^{h_{max}(n)}\tau\rho^{-h(2\alpha+d)}+C'\sum_{h=0}^{h_{max}(n)}\rho^{-h(2\alpha+d-\alpha\beta)}\right)\quad \text{where}\,C_1=2\nu_2^{-2}\\
               &\leq C_1 \log\left(2n^3M\right)\left(C_2\tau\rho^{-h_{\max}(n)(2\alpha+d)}+C_3\rho^{-h_{\max}(n)(2\alpha+d-\alpha\beta)}\right)\quad\text{for some constants}\;C_2,C_3>0 \\
               &\leq  C_4\log\left(2n^3M\right)\max\left(\tau\rho^{-h_{\max}(n)(2\alpha+d)},\rho^{-h_{\max}(n)(2\alpha+d-\alpha\beta)}\right)\quad\text{where}\;C_4=2\max(C_3,C_2)C_1.
\end{align*}
If $\tau\rho^{-h_{\max}(n)(2\alpha+d)}\geq \rho^{-h_{\max}(n)(2\alpha+d-\alpha\beta)}$, that is $\tau\geq \rho^{h_{\max}\alpha\beta}$, then we can easily obtain 
$$\rho^{h_{\max}(n)}\leq \left(\frac{\tau}{n}.C_4\log\left(2n^3M\right)\right)^{\frac{1}{2\alpha+d}}$$
Similarly, if $\tau\rho^{-h_{\max}(n)(2\alpha+d)}\leq \rho^{-h_{\max}(n)(2\alpha+d-\alpha\beta)}$, 
we get
$$\rho^{h_{\max}(n)}\leq \left(\frac{1}{n}.C_4\log\left(2n^3M\right)\right)^{\frac{1}{2\alpha+d-\alpha\beta}}.$$
Consequently, 
$$\rho^{h_{\max}(n)}\leq \max\left(\left(\frac{1}{n}.C_4\log\left(2n^3M\right)\right)^{\frac{1}{2\alpha+d-\alpha\beta}}, \left(\frac{\tau}{n}.C_4\log\left(2n^3M\right)\right)^{\frac{1}{2\alpha+d}}\right),$$ 
which can be rewritten as:

\begin{equation}
\rho^{h_{\max}(n)}\leq\left\{
 \begin{array}{ll}   
        \left(\frac{\tau}{n}.C_4\log\left(2n^3M\right)\right)^{\frac{1}{2\alpha+d}} \quad\text{if}\; \tau\geq \left(\frac{1}{n}.C_4\log\left(2n^3M\right)\right)^{\frac{\alpha\beta}{2\alpha+d-\alpha\beta}} \\
        \left(\frac{1}{n}.C_4\log\left(2n^3M\right)\right)^{\frac{1}{2\alpha+d-\alpha\beta}}\;\text{if}\;\tau\leq \left(\frac{1}{n}.C_4\log\left(2n^3M\right)\right)^{\frac{\alpha\beta}{2\alpha+d-\alpha\beta}} 
    \end{array}
\right.
\end{equation}

\end{proof}

\begin{lem}{(monotonicity of the classification uncertainty)}\label{lemma:monotonocity}~\\
For a fixed $t\geq 1$, let $I(t)$  be defined as: 

\begin{equation}
    I(t)=\max_{\mathcal{X}_{h,i}\in\mathcal{X}_u^{(t)}}\;\left(\min_{y\in \mathcal{Y}^{(t)}_{h,i}}\,U_{f(:;y)}^{(t)}(\mathcal{X}_{h,i})-\min_{y\in \mathcal{Y}^{(t)}_{h,i}} L_{f(:;y)}^{(t)}(\mathcal{X}_{h,i})\right),
\end{equation}
where for $y\in \mathcal{Y}^{(t)}_{h,i}$, the quantities $U_{f(:;y)}^{(t)}(\mathcal{X}_{h,i})$ and $L_{f(:;y)}^{(t)}(\mathcal{X}_{h,i})$ are respectively the upper and lower confidence bounds of $f(.;y)$ at step $t.$ Let us assume that 
\begin{itemize}
\item The Assumption \ref{ass:smoothness} holds. 
\item In Algorithm \ref{alg:active cost-sensitive}, the update of upper and lower confidence bounds are made according to Equations \eqref{eq:upper01}, \eqref{eq:lower01} which are stated in the main text.

\end{itemize}
 Then, in the event $E$, the function $I$ is non increasing with respect to $t$.
\end{lem}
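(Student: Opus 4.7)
The plan is to argue cell-by-cell: show that for any cell kept between steps $t-1$ and $t$ the per-cell uncertainty
$$
I_t(\mathcal{X}_{h,i}) := \min_{y\in\mathcal{Y}_{h,i}^{(t)}} U^{(t)}_{f(:;y)}(\mathcal{X}_{h,i}) - \min_{y\in\mathcal{Y}_{h,i}^{(t)}} L^{(t)}_{f(:;y)}(\mathcal{X}_{h,i})
$$
decreases weakly, and that any cell newly appearing in $\mathcal{X}_u^{(t)}$ simply inherits its value from a parent in $\mathcal{X}_u^{(t-1)}$. Taking the max over $\mathcal{X}_u^{(t)}$ then gives $I(t)\le I(t-1)$.

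First I would observe that by the updates \eqref{eq:lower01}--\eqref{eq:upper01}, $L^{(t)}_{f(:;y)}(\mathcal{X}_{h,i})$ is non-decreasing and $U^{(t)}_{f(:;y)}(\mathcal{X}_{h,i})$ is non-increasing in $t$, and that $\mathcal{Y}^{(t)}_{h,i}\subseteq\mathcal{Y}^{(t-1)}_{h,i}$. Then I would prove the key lemma: on the event $E$, neither $\argmin_{y\in\mathcal{Y}_{h,i}^{(t-1)}} U^{(t)}_{f(:;y)}(\mathcal{X}_{h,i})$ nor $\argmin_{y\in\mathcal{Y}_{h,i}^{(t-1)}} L^{(t)}_{f(:;y)}(\mathcal{X}_{h,i})$ is eliminated when forming $\mathcal{Y}_{h,i}^{(t)}$. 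For the $U$-argmin $y^\star$, the elimination criterion requires $L^{(t)}_{f(:;y^\star)} > \min_{y'} U^{(t)}_{f(:;y')} = U^{(t)}_{f(:;y^\star)}$, which contradicts the validity of the confidence bounds on $E$. For the $L$-argmin $y_L$, we have $L^{(t)}_{f(:;y_L)} \le L^{(t)}_{f(:;y^\star)} \le U^{(t)}_{f(:;y^\star)} = \min_{y'} U^{(t)}_{f(:;y')}$ on $E$, so $y_L$ passes the test.

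Consequently, for any cell $\mathcal{X}_{h,i}\in\mathcal{X}_u^{(t-1)}\cap\mathcal{X}_u^{(t)}$,
$$
\min_{y\in\mathcal{Y}^{(t)}} U^{(t)}_{f(:;y)}=\min_{y\in\mathcal{Y}^{(t-1)}} U^{(t)}_{f(:;y)}\le \min_{y\in\mathcal{Y}^{(t-1)}} U^{(t-1)}_{f(:;y)},
$$
and symmetrically for the $L$-terms (with the opposite inequality), so that $I_t(\mathcal{X}_{h,i})\le I_{t-1}(\mathcal{X}_{h,i})$. For cells that appeared in $\mathcal{X}_u^{(t)}$ as children of a refined parent $\mathcal{X}_{h_t,i_t}\in\mathcal{X}_u^{(t-1)}$, the algorithm copies $L, U, \mathcal{Y}$ from the parent, and at a refinement step the parent's own bounds are unchanged between $t-1$ and $t$; therefore $I_t(\text{child})=I_{t-1}(\mathcal{X}_{h_t,i_t})$. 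Taking the maximum over $\mathcal{X}_u^{(t)}$ then bounds every contributing term by some $I_{t-1}(\mathcal{X}_{h',i'})$ with $\mathcal{X}_{h',i'}\in\mathcal{X}_u^{(t-1)}$, yielding $I(t)\le I(t-1)$.

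The main obstacle is the first step: a naive argument that $\mathcal{Y}^{(t)}\subseteq\mathcal{Y}^{(t-1)}$ only tells us that the minima increase when we pass to the smaller set, which goes the wrong way for the $U$-term but the right way for the $L$-term. The whole monotonicity therefore rests on the observation that, on $E$, the eliminations are "safe" in the sense that they never remove the label achieving the current $U$-minimum or $L$-minimum. Once this invariance of the argmins is established, the rest is a straightforward bookkeeping across the two cases (interact vs.\ refine) in Algorithm~\ref{alg:active cost-sensitive}.
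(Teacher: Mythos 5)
Your proof is correct and in the same spirit as the paper's, whose entire argument is a one-line appeal to the running max/min updates \eqref{eq:lower01}--\eqref{eq:upper01}; you additionally supply the detail that this one-liner glosses over, namely that on $E$ the validity of the confidence bounds ($L^{(t)}_{f(:;y)}\leq f(\cdot;y)\leq U^{(t)}_{f(:;y)}$) guarantees the label attaining $\min_y U^{(t)}_{f(:;y)}$ is never eliminated, so the shrinkage $\mathcal{Y}^{(t)}_{h,i}\subseteq\mathcal{Y}^{(t-1)}_{h,i}$ cannot push the $U$-minimum up, together with the bookkeeping for refined and classified cells. This is exactly the right way to complete the argument, and no gap remains.
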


\begin{proof}~\\
This follows from the definition of  $U_{f(:;y)}^{(t)}(\mathcal{X}_{h,i})$ and $L_{f(:;y)}^{(t)}(\mathcal{X}_{h,i})$  in Equations \eqref{eq:upper01}, \eqref{eq:lower01}. 
\end{proof}

\begin{lem}{(Rate of convergence)}~\\
Let us assume that Assumptions \ref{ass:smoothness}, \ref{Ass:margin-noise}, \eqref{ass:strong density} hold and that: 
\begin{itemize}
\item In Algorithm \ref{alg:active cost-sensitive}, the update of upper and lower confidence bounds are made according to Equations \eqref{eq:upper01}, \eqref{eq:lower01} which are stated in the main text.
\item In Algorithm \ref{alg:active cost-sensitive}, for all $1\leq t\leq T_n$, the refinement condition $A_r^{(t)}(\mathcal{X}_{h,i})=True$ is equivalent to $V(n_{h,i}(t))\leq 2B_{h}$, where $n_{h,i}(t)$ is the number of interactions made with the cell $\mathcal{X}_{h,i}$ up to step $t$.
\end{itemize}
Then, with probability at least $1-\frac{1}{n}$, the excess risk of the classifier $\hat{g}_n$ provided by Algorithm \ref{alg:active cost-sensitive} satisfies: 
\begin{equation}
\varepsilon_{cs}(\hat{g}_n)\leq\left\{
 \begin{array}{ll}   
        C_6\left(\frac{\tau}{n}.c_{cs}\log\left(2n^3M\right)\right)^{\frac{\alpha(\beta+1)}{2\alpha+d}} \quad\text{if}\; \tau\geq \left(\frac{1}{n}.c_{cs}\log\left(2n^3M\right)\right)^{\frac{\alpha\beta}{2\alpha+d-\alpha\beta}} \\
        C_9\left(\frac{1}{n}.c_{cs}\log\left(2n^3M\right)\right)^{\frac{\alpha(\beta+1)}{2\alpha+d-\alpha\beta}}\;\text{if}\;\tau\leq \left(\frac{1}{n}.c_{cs}\log\left(2n^3M\right)\right)^{\frac{\alpha\beta}{2\alpha+d-\alpha\beta}}, 
    \end{array}
\right.
\end{equation}
where $c_{cs}, C_6, C_9$ are absolute constants.
\end{lem}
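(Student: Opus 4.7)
My plan is to condition on the favorable event $E$ from Lemma \ref{lem-favorable event} (which has probability at least $1-1/n$) and prove a deterministic bound there. On $E$, I would decompose the excess risk by the cells of the final partition,
$$\varepsilon_{cs}(\hat g_n) \;=\; \sum_{\mathcal{X}_{h,i} \in \mathcal{X}_u^{(T_n)} \cup \mathcal{X}_c^{(T_n)}} \int_{\mathcal{X}_{h,i}} \bigl(f(x;\hat g_n(x)) - f(x;f^*_{cs}(x))\bigr)\,p_X(x)\,dx.$$
By Lemma \ref{lemma:correct} the cells in $\mathcal{X}_c^{(T_n)}$ contribute zero, since their unique surviving candidate coincides with some optimal label at every $x$ in the cell. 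For an unclassified cell $\mathcal{X}_{h,i}$, since $\hat g_n(x) \in \argmin_{y\in \mathcal{Y}_{h,i}^{(T_n)}} U^{(T_n)}_{f(\cdot;y)}(\mathcal{X}_{h,i})$ and $f^*_{cs}(x) \in \mathcal{Y}_{h,i}^{(T_n)}$, the per-point excess cost is at most $U^{(T_n)}_{f(\cdot;f^*_{cs}(x))}(\mathcal{X}_{h,i}) - L^{(T_n)}_{f(\cdot;f^*_{cs}(x))}(\mathcal{X}_{h,i})$. Combining \eqref{eq:lower01}--\eqref{eq:upper01} with the refinement rule and Lemma \ref{Lemma-expand} at depth $h$, this width is at most a constant multiple of $B_h$; moreover, all surviving candidates lie within $12B_h$ of the optimal cost, so the contribution from the multi-optimal region $\{\Delta(x)=\infty\}$ vanishes pointwise.

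Next I would aggregate the per-cell bounds. Using Assumption \ref{ass:strong density} to bound each cell's mass by $C\rho^{hd}$, and Lemma \ref{lemma-activeset} to bound the number of unclassified cells at depth $h$ by $C\tau/\rho^{hd} + C'\rho^{h(\alpha\beta-d)}$, the factor $\rho^{hd}$ cancels in each summand and yields
$$\varepsilon_{cs}(\hat g_n) \;\le\; C\,\sum_{h=0}^{h_{\max}(n)} B_h\,\bigl(\tau + B_h^{\beta}\bigr).$$
I would then substitute Lemma \ref{lemma-largestdepht}'s bound on $\rho^{h_{\max}(n)}$ into $B_{h_{\max}(n)}$ and examine the two regimes separately. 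In the small-$\tau$ regime ($\tau \le \tau_0$) the $B_h^{\beta+1}$ contribution is dominant, and plugging in $\rho^{h_{\max}(n)} \le (c_{cs}\log(2n^3M)/n)^{1/(2\alpha+d-\alpha\beta)}$ yields the claimed rate $\bigl(c_{cs}\log(2n^3M)/n\bigr)^{\alpha(\beta+1)/(2\alpha+d-\alpha\beta)}$; in the large-$\tau$ regime ($\tau \ge \tau_0$) the bound $\rho^{h_{\max}(n)} \le (c_{cs}\tau\log(2n^3M)/n)^{1/(2\alpha+d)}$ combines with the $B_{h_{\max}}^{\beta+1}$ term to produce $\bigl(c_{cs}\tau\log(2n^3M)/n\bigr)^{\alpha(\beta+1)/(2\alpha+d)}$.

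The principal obstacle is ensuring that each sum is genuinely controlled by its $h=h_{\max}(n)$ contribution rather than by the geometrically decaying terms at shallow depths. To handle this I plan to exploit, on the one hand, the refinement rule $A_r^{(t)}=\textup{True} \iff V(n_{h,i}(t))\le 2B_h$ combined with Lemma \ref{Lemma-expand} to argue that any cell still unclassified at shallow depth $h$ contains only points with $\Delta(x)\le 12B_h$ or $\Delta(x)=\infty$ (the latter contributing zero), and on the other hand the monotonicity of uncertainty from Lemma \ref{lemma:monotonocity} and the budget accounting underlying Lemma \ref{lemma-largestdepht}, which together force the algorithm to have dissolved coarse unclassified cells into finer ones before exhausting the budget. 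This concentrates the non-vanishing mass of $\{\hat g_n \neq f^*_{cs}\}$ at the deepest accessible scale $h_{\max}(n)$ and yields the stated rates.
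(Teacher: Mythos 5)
Your overall skeleton (work on $E$, classified cells contribute nothing by Lemma \ref{lemma:correct}, bound the pointwise excess on an unclassified cell by a confidence width, then invoke Lemma \ref{lemma-largestdepht}) matches the paper, but your aggregation step is where the argument genuinely breaks. First, the sum $\sum_{h=0}^{h_{\max}(n)} B_h(\tau+B_h^{\beta})$ is dominated by the \emph{shallow} depths (it is of order $\tau+1$, since $B_h$ decays geometrically in $h$), and the fix you sketch --- that the budget forces the algorithm to ``dissolve'' coarse unclassified cells before stopping --- is not true: a shallow cell whose uncertainty has become small is simply never selected again and can sit in $\mathcal{X}_u^{(T_n)}$ forever. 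The correct mechanism, and the one the paper uses, is different: by Lemma \ref{lemma:monotonocity} the \emph{maximal} uncertainty $I(t)$ is non-increasing, and at the step $\hat t$ when the deepest cell is created its parent was the argmax and satisfied the refinement rule, so $I(\hat t)\le 6B_{h_{\max}(n)-1}$; hence \emph{every} cell of $\mathcal{X}_u^{(T_n)}$, at whatever depth, has pointwise excess at most $6B_{h_{\max}(n)-1}$. This removes any need to sum over depths or to count cells.

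Second, even if you could restrict your sum to the deepest scale, the per-cell bookkeeping (width $\times$ cell mass $\times$ number of active cells from Lemma \ref{lemma-activeset}) yields $B_{h_{\max}}(\tau+B_{h_{\max}}^{\beta})$, and the term $\tau B_{h_{\max}}$ is strictly larger than the claimed rate $\left(\tfrac{\tau}{n}c_{cs}\log(2n^3M)\right)^{\frac{\alpha(\beta+1)}{2\alpha+d}}$ throughout the regime $\tau\ge\tau_0$ (take $\tau$ of constant order: your term is $B_{h_{\max}}\asymp(\log n/n)^{\alpha/(2\alpha+d)}$ versus the target $B_{h_{\max}}^{\beta+1}$). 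The reason is that charging the full width over the entire $\tau$-mass of active cells ignores that on those cells only the points that are actually mispredicted contribute, and any such point satisfies $\Delta(x)\le 6B_{h_{\max}(n)-1}$ (ties, i.e.\ $\Delta(x)=\infty$, contribute zero, but the non-uniqueness region with finite $\Delta$ does not vanish pointwise and is exactly what the $\tau$ in the cell count records). The paper therefore bounds the mass of the mispredicted set directly by the first part of Assumption \ref{Ass:margin-noise}, $P_X(\Delta\le 6B_{h_{\max}(n)-1})\le C_\beta(6B_{h_{\max}(n)-1})^{\beta}$ in the large-$\tau$ regime (and via $\Delta'$ with $\tau\le\tau_0$ in the other regime), giving $\varepsilon_{cs}(\hat g_n)\le 6B_{h_{\max}(n)-1}\,P_X(\Delta\le 6B_{h_{\max}(n)-1})$ and then the stated rates after substituting Lemma \ref{lemma-largestdepht}. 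As written, your route cannot recover the large-$\tau$ rate, so the proof is incomplete on this point as well.
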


\begin{proof}~\\
On the event $E$, the excess can be rewritten as:
\begin{align*}
    \varepsilon_{cs}(\hat{g}_n)&=\mathbb{E}_{X,c}(c(\hat{g}_n)-c(f_{cs}^*(x)))\\
                        &=\mathbb{E}_{X}\left[f(X;\hat{g}_n(X))-f(X;f_{cs}^*(X))\right]\\
                        &=\mathbb{E}_{X}\left[f(X;\hat{g}_n(X))-f(X;f_{cs}^*(X))\mathds{1}_{X\in \bigcup_{\mathcal{X}_{h,i}\in \mathcal{X}_u^{(T_n)}} \mathcal{X}_{h,i}}\mathds{1}_{f(X,\hat{g}_n(X))\neq f(X,f_{cs}^*(X))}\right]\quad \text{by Lemma \ref{lemma:correct}},
\end{align*}
where $T_n$ is introduced in Lemma \ref{lemma-largestdepht}.\\
Let $x$ $\in$  $\mathcal{X}_{h,i}$, with $\mathcal{X}_{h,i}\in \mathcal{X}_u^{(T_n)}$, then on the event $E$, we have by definition of $\hat{g}_n$ and Equation \eqref{eq:boundconfi} 
 \begin{equation}
 \label{eq:lemma:excess1}
     f(x,\hat{g}_n(x))-f(x,f_{cs}^*(x))\leq \min_{y\in \mathcal{Y}^{(T_n)}_{h,i}}\,U_{f(:;y)}^{(T_n)}(\mathcal{X}_{h,i})-\min_{y\in \mathcal{Y}^{(T_n)}_{h,i}} L_{f(:;y)}^{(T_n)}(\mathcal{X}_{h,i})
 \end{equation}
 At step $T_n$,  let $\mathcal{X}_{h_{\max}(n),i_{\max}}$ be denoted as the deepest active cell and $x_{h_{\max}(n),i_{\max}}$ its corresponding center. As introduced in Definition \ref{def:active}, let $\hat{t}(x_{h_{max}})\leq T_n$. To simplify the notations, we denote $\hat{t}:=\hat{t}(x_{h_{max}}).$
 
 Furthermore,  let us consider $\mathcal{X}_{h_{\max}(n)-1,\tilde{i}}$ (for some $\tilde{i}=1,\ldots 2^{h_{\max}(n)-1}$) the cell at the previous depth which contains $x_{h_{\max}(n),i_{\max}}$ and we denote by $x_{h_{\max}(n)-1,\tilde{i}}$ its corresponding center.  Then by Lemma \ref{lemma:monotonocity}, the r.h.s of \eqref{eq:lemma:excess1} can be upper bounded: 
 
 \begin{equation}
 \label{eq:lemma:excess2}
    f(x,\hat{g}_n(x))-f(x,f_{cs}^*(x))\leq \min_{y\in \mathcal{Y}^{(\hat{t})}_{h_{\max}(n)-1,\tilde{i}}}\,U_{f(:;y)}^{(\hat{t})}(\mathcal{X}_{h_{max}(n)-1,\tilde{i}})-\min_{y\in \mathcal{Y}^{(\hat{t})}_{h_{\max}(n)-1,\tilde{i}}}\,L_{f(:;y)}^{(\hat{t})}(\mathcal{X}_{h_{max}(n)-1,\tilde{i}})
 \end{equation}
 As the refinement criterion is satisfied at that time (at the beginning of step $\hat{t}$) at $\mathcal{X}_{h_{\max}(n)-1,\tilde{i}}$, we have: $$V(n_{h_{\max}(n)-1,\tilde{i}}(\hat{t}))\leq 2B_{h_{\max}(n)-1},$$ and Equation \eqref{eq:lemma:excess2} becomes: 
 
 \begin{equation}
 \label{eq:excess3}
     f(x,\hat{g}_n(x))-f(x,f^*(x))\leq 6B_{h_{\max}(n)-1}.
 \end{equation}
 Besides, let $x$ $\in$ $\cup_{\mathcal{X}_{h,i}\in \mathcal{X}_u^{(T_n)}} \mathcal{X}_{h,i}$ with $f(x,\hat{g}_n(x))\neq f(x,f_{cs}^*(x))$, then we have: 
 \begin{align*}
\Delta(x) &\leq f(x,\hat{g}_n(x))-f(x,f^*(x))\\
                                       &\leq 6B_{h_{\max}(n)-1}\quad \text{by Equation}\; \eqref{eq:excess3}.
\end{align*}
 Consequently, by using Assumption \ref{Ass:margin-noise}, we get: 
 
 $$\varepsilon_{cs}(\hat{g}_n)\leq 6B_{h_{\max}(n)-1}P_X\left(\Delta(x)\leq 6B_{h_{\max}(n)-1}\right)$$
 and by Lemma \ref{lemma-largestdepht}, if
 $$\tau\geq \left(\frac{1}{n}.C_4\log\left(2n^3M\right)\right)^{\frac{\alpha\beta}{2\alpha+d-\alpha\beta}},$$ 
 we obtain: 
 \begin{align*}
 \varepsilon_{cs}(\hat{g}_n)&\leq C_6\rho^{h_{\max}(n)\alpha(\beta+1)}\quad\text{where}\; C_6=36C_{\beta}\frac{\nu_2^{\alpha(\beta+1)}}{\rho^{\alpha(\beta+1)}}\\
                     & \leq C_6\left(\frac{\tau}{n}.C_4\log\left(2n^3M\right)\right)^{\frac{\alpha(\beta+1)}{2\alpha+d}}.
 \end{align*}
 
 Similarly, if 
 \begin{equation}
 \label{eq7:proofexcess}
 \tau\leq \left(\frac{1}{n}.C_4\log\left(2n^3M\right)\right)^{\frac{\alpha\beta}{2\alpha+d-\alpha\beta}},
 \end{equation}
 we have: 
 \begin{align*}
  \varepsilon_{cs}(\hat{g})&\leq 6b_{h_{\max}(n)-1}P_X\left(\Delta'(x)\leq 6b_{h_{\max}(n)-1}\right)\\
  &\leq C_7\rho^{h_{\max}(n)\alpha}\left(\tau+C_8\rho^{h_{\max}(n)\alpha\beta}\right)\quad\text{where}\;C_7=6\frac{\nu_2^{\alpha}}{\rho^{\alpha}}, C_8=6C_{\beta}\frac{\nu_2^{\alpha(\beta+1)}}{\rho^{\alpha\beta}}.
 \end{align*}
 By using \eqref{eq7:proofexcess} and \eqref{eq:lemma-largestdepht}, we get: 
 
 \begin{align*}
 \varepsilon_{cs}(\hat{g})&\leq \left(\frac{1}{n}.C_4\log\left(2n^3M\right)\right)^{\frac{\alpha}{2\alpha+d-\alpha\beta}}\left( \left(\frac{1}{n}.C_4\log\left(2n^3M\right)\right)^{\frac{\alpha\beta}{2\alpha+d-\alpha\beta}}+C_8\left(\frac{1}{n}.C_4\log\left(2n^3M\right)\right)^{\frac{\alpha\beta}{2\alpha+d-\alpha\beta}}\right) \\
                  &\leq C_9\left(\frac{1}{n}.C_4\log\left(2n^3M\right)\right)^{\frac{\alpha(\beta+1)}{2\alpha+d-\alpha\beta}}.
 \end{align*}
 
 \end{proof}

\subsection{Proof of Theorem \ref{theo-lower}}
\label{sec:prooftheolower}
This Section is devoted to provide the lower bounds corresponding to the upper bounds provided in Theorem \ref{theo:upper-bounds}. We will firstly state the relation that exists between problems from standard multiclass classification and cost-sensitive classification. By referring to Assumptions respectively from Section \ref{sec:assumption}, this preliminary result will allow us to obtain our lower bound in cost-sensitive setting through a lower bound obtained in standard classification.
\subsubsection{Relation with multi-class classification} 
Let us consider the setting of the standard multi-class classification which is defined as follows: let $\mathcal{Y}$ be the set of labels $\lbrace 1,\ldots, M\rbrace $, and an unknown probability $P$ defined on $\mathcal{X}\times\mathcal{Y}$. We consider an i.i.d sample $S_n=\{(X_1,Y_1),\ldots, (X_n,Y_n)\}$ generated according to the probability $P.$ Based on the sample $S_n$, the aim is to construct a classifier $\hat{g}:\mathcal{X}\longrightarrow \mathcal{Y}$ that minimizes the error of classification 
$$R_{c\ell}(g)=P(g(X)\neq Y).$$  
Let $\eta(x)=(\eta_1(x), \ldots, \eta_M(x))$ be the regression function with $\eta_j(x)=P(Y=j\,\vert X=x)$ for all $j$ 
$\in$ $\lbrace 1,\ldots,M\rbrace$. It is well-known that the oracle mapping $$f_{c\ell}^*(x)\in \argmax_{j\in\lbrace 1,\ldots,M\rbrace}\;\eta_j(x)$$ minimizes the risk $R_{c\ell}(g)$ over all measurable classifiers.\\
For $x$ $\in$ $\mathcal{X}$, and $y\in \mathcal{Y}$, let $\Delta_{\eta}(x,y)$ be defined as: 
$$\Delta_{\eta}(x,y)=\max_{y'\in\mathcal{Y}}\,\eta_{y'}(x)-\eta_y(x),$$ and
$$ \Delta_{\eta}(x)= \left\{
    \begin{array}{ll}
          \min_{y\in\mathcal{Y}}\lbrace \Delta_{\eta}(x,y):\;\Delta_{\eta}(x,y)>0\rbrace\;\;\mbox{If}& \;\;\;\exists\;y\in\mathcal{Y},\;\Delta_{\eta}(x,y)>0 \\
        \infty & \;\;\;\mbox{Otherwise.}
    \end{array}
\right.$$
Given an instance $x$, let us consider the order statistic on $\eta(x):$
$$\eta_{(1)}(x)\geq \eta_{(2)}(x)\geq...\geq \eta_{(M)}(x).$$
We define: 
$$\Delta'_{\eta}(x)=\eta_{(1)}(x)-\eta_{(2)}(x).$$

Equivalently as in the cost-sensitive setting, we introduce consider the following assumptions standard multi-class classification:
 
\begin{ass}[Refined margin noise assumption in multi-class classification]\label{Ass:margin-noise-multiclass}~\\
There exist parameters $\beta_{\eta}, C_{\beta_{\eta}}, C'_{\beta_{\eta}}, \tau\geq 0$ such that for all $\epsilon>0$
$$P_{X}(x\in \mathcal{X},\;\;\Delta_{\eta}(x)\leq \epsilon)\leq C_{\beta_{\eta}}\epsilon^{\beta_{\eta}},$$
and 
$$P_{X}(x\in \mathcal{X},\;\;\Delta'_{\eta}(x)\leq \epsilon)\leq \tau+ C'_{\beta_{\eta}}\epsilon^{\beta_{\eta}}.$$
\end{ass}

\begin{ass}[Smoothness assumption on the regression function]\label{Ass:smoothness assumption-multiclass}~\\
There exist two parameters $\alpha_{\eta}$, $L_{\eta}>0$ such that for all $x$, $z$ $\in$  $\mathcal{X}$, we have 
$$\vert\eta_y(x)-\eta_y(z)\vert\leq L_{\eta}\parallel x-z\parallel_2^{\alpha_{\eta}}\quad \text{for all}\; y\in \mathcal{Y}.$$
\end{ass}
Similarly as in Definition \ref{def:cost-sensitive measure} in the main text, we introduce the class of probabilities in standard classification setting.
\begin{defi}[Multi-class classification measure]\label{def:classification measure}~\\
Let $\Xi=(0,1)\times (0,+\infty)^4\times[0,1]\times (0,1)\times(0,\infty)$. For any $\zeta=(\alpha_{\eta},L_{\eta},\beta_{\eta},C_{\beta_{\eta}},C'_{\beta_{\eta}},\tau,\mu_{\min},\mu_{\max})\in \Xi$, we denote $\mathcal{P}_{c\ell}(\zeta)$ the class of probability measures $P$ on $\mathcal{X}\times\mathcal{Y}$ such that: 
\begin{itemize}
    \item The regression function $($associated to $P)$ $\eta$ satisfies Assumption~\ref{Ass:smoothness assumption-multiclass} with parameters $\alpha_{\eta},L_{\eta}$.
    \item The probability $P$ satisfies Assumption \ref{Ass:margin-noise-multiclass} with parameters $\beta_{\eta},C_{\beta_{\eta}},C'_{\beta_{\eta}},\tau$.
    \item The probability $P$ satisfies Assumption \ref{ass:strong density} with parameters $\mu_{\min},\mu_{\max}$. 
 \end{itemize}
\end{defi}

Below, we state a result from \citep{reeve2019learning} which relates multi-class classification problem to the 
corresponding problems within the cost-sensitive framework.
\begin{prop}[\citep{reeve2019learning}]\label{prop:convert:cost}~\\
Let us suppose we have a classification problem which consists in minimising over all measurable classifiers $g:\mathcal{X} \longrightarrow \mathcal{Y}$  the error of classification $R_{c\ell}(g)=P(g(X)\neq Y)$ given an i.i.d sample $(X_1,Y_1)\ldots, (X_n,Y_n)$ distributed according to a probability $P$ over $\mathcal{X}\times \mathcal{Y}$. To each couple $(X,Y)$, we associate $(X,c)$ with $c=\Gamma^{(c\ell)}\Phi_{\mathcal{Y}}(Y)$  where $\Gamma^{(c\ell)}$ is a $M\times M$ matrix defined by: $\Gamma^{(c\ell)}_{ij}=1$ if $i\neq j$, $\Gamma^{(c\ell)}_{ii}=0$ and $\Phi_{\mathcal{Y}}(Y)$ is a $M\times 1$ matrix with $\Phi_{\mathcal{Y}}(Y)_{j1}=0$ if $j\neq Y$ and $\Phi_{\mathcal{Y}}(Y)_{YY}=1$. This leads to a cost-sensitive problem $P^*$ with the following properties: 

\begin{itemize}
    \item The oracle mapping $f_{cs}^*(x)$ $\in$ $\argmin \lbrace y\in \mathcal{Y}, f(x;y)\rbrace $ where $f(x;y)=\mathbb{E}(c(y)\vert X=x)$.
    \item Given any classifier $g:\mathcal{X} \longrightarrow \mathcal{Y}$, we have: 
    \begin{align*}
    \varepsilon_{cs}(g)&=\mathbb{E}\left[f(X;g(X))-f(X;f_{cs}^*(X))\right]\\
    &=\varepsilon_{c\ell}(g)=\mathbb{E}\left[\eta_{f_{c\ell}^{*}(X)}(X)-\eta_{g(X)}(X)\right].
    \end{align*}
    \item Let $\zeta=(\alpha_{\eta},L_{\eta},\beta_{\eta},C_{\beta_{\eta}},C'_{\beta_{\eta}},\tau,\mu_{\min},\mu_{\max})\in \Xi$ introduced in Definition \ref{def:classification measure}. Whenever the classification problem $P$ belongs to $\mathcal{P}_{c\ell}(\zeta)$, the corresponding cost-sensitive problem $P^*$ belongs to $\mathcal{P}_{cs}(\zeta)$. 
\end{itemize}
\end{prop}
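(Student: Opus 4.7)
The entire proposition hinges on a single algebraic identity that I would establish first: because $\Phi_{\mathcal{Y}}(Y)$ is the indicator column vector of $Y$, the $i$-th coordinate of $c=\Gamma^{(c\ell)}\Phi_{\mathcal{Y}}(Y)$ equals $\Gamma^{(c\ell)}_{iY}=\mathbf{1}\{i\neq Y\}$. Taking conditional expectations then gives
$$f(x;y)=\mathbb{E}(c(y)\mid X=x)=\mathbb{P}(Y\neq y\mid X=x)=1-\eta_y(x).$$
Armed with this, the first bullet is immediate: minimising $f(x;y)$ over $y$ is equivalent to maximising $\eta_y(x)$, so $\argmin_{y\in\mathcal{Y}} f(x;y)=\argmax_{y\in\mathcal{Y}} \eta_y(x)$, which says $f^{*}_{cs}$ and $f^{*}_{c\ell}$ coincide up to the common tie-breaking convention. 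The second bullet follows by direct substitution: for any classifier $g$,
$$f(X;g(X))-f(X;f^{*}_{cs}(X))=\eta_{f^{*}_{c\ell}(X)}(X)-\eta_{g(X)}(X),$$
and integrating against the common marginal $P_X$ (the construction $Y\mapsto c$ leaves $P_X$ unchanged) delivers $\varepsilon_{cs}(g)=\varepsilon_{c\ell}(g)$.

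For the third bullet I would verify the three defining conditions of $\mathcal{P}_{cs}(\zeta)$ separately. The H\"older-smoothness Assumption \ref{ass:smoothness} is inherited with identical parameters $(\alpha,L)=(\alpha_\eta,L_\eta)$, since $|f(x;y)-f(z;y)|=|\eta_y(x)-\eta_y(z)|\leq L_\eta\|x-z\|^{\alpha_\eta}$. The strong-density Assumption \ref{ass:strong density} also transfers trivially, because the construction does not alter the $X$-marginal, so the same $\mu_{\min},\mu_{\max}$ work. The refined margin-noise Assumption \ref{Ass:margin-noise} is the point that deserves care: from $f(x;y)=1-\eta_y(x)$ one obtains $\Delta(x,y)=f(x;y)-\min_{y'}f(x;y')=\max_{y'}\eta_{y'}(x)-\eta_y(x)=\Delta_\eta(x,y)$, hence $\Delta(x)=\Delta_\eta(x)$; and because the order statistic of $f(x;\cdot)$ is the reverse of that of $\eta(x)$, the two smallest values $f(x;y^{(1)}), f(x;y^{(2)})$ are $1-\eta_{(1)}(x)$ and $1-\eta_{(2)}(x)$, so $\Delta'(x)=\eta_{(1)}(x)-\eta_{(2)}(x)=\Delta'_\eta(x)$. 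The two tail bounds in Assumption \ref{Ass:margin-noise} therefore follow from those in Assumption \ref{Ass:margin-noise-multiclass} with the same parameters $(\beta,C_\beta,C'_\beta,\tau)=(\beta_\eta,C_{\beta_\eta},C'_{\beta_\eta},\tau)$.

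The main (and really only) obstacle is the bookkeeping of the order reversal ``$\min$ of $f$ corresponds to $\max$ of $\eta$'' when one checks that $\Delta$ and $\Delta'$ transfer with unchanged parameters; once that is handled cleanly, all three bullets reduce to elementary substitutions based on the identity $f(x;y)=1-\eta_y(x)$.
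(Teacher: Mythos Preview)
Your argument is correct. The paper itself does not give a proof of this proposition; it is stated with an explicit attribution to \citep{reeve2019learning} and then used as a black box in the derivation of Theorem~\ref{theo-lower}. Your verification via the identity $f(x;y)=1-\eta_y(x)$ is exactly the elementary computation one would expect, and all three bullets follow from it as you describe; in particular, the order-reversal bookkeeping showing $\Delta(x)=\Delta_\eta(x)$ and $\Delta'(x)=\Delta'_\eta(x)$ is handled cleanly.
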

The above proposition is very important as it allows to translate multi-class classification problem to a corresponding one in cost-sensitive framework; in this case, providing a lower bound in multi-class classification  implies a lower bound in the cost-sensitive framework.   

Next, we state our result which provides a lower bound in active learning classification. For simplicity, we will use $M=2$, but the extension to $M>2$ is straightforward.  
\subsubsection{Lower bounds for standard multi-class classification}

\begin{theo}[Lower bound for multi-class classification]\label{theo-lower-class}
Let us consider a multi-class classification problem with $M=2$. Let $\zeta=(\alpha,L,\beta,C_{\beta},C'_{\beta},\tau,\mu_{\min},\mu_{\max})\in \Xi$ introduced in Definition \ref{def:classification measure}. We assume that $L,C_{\beta},C'_{\beta} \in (1,\infty)$,  $\alpha\beta \leq d$ and $\mu_{min}\in (0,\tau)$. There exist  constants $C_{c\ell}$, $C_{c\ell}'$  (independent of $n$) such that for any  active classifier $\hat{g}_n$, we have: 
$$\inf_{\hat{g}_n}\sup_{P\in\mathcal{P}_{c\ell}(\zeta)} \mathbb{E} (R_{c\ell}(\hat{g}_n))-R_{c\ell}(g^*))\geq  \min(a_{n,\tau},a'_{n,\tau}),$$

where 
$$a_{n,\tau}=C_{c\ell}\max\left(\left(\frac{\tau}{n}\right)^{\frac{\alpha(\beta+1)}{2\alpha+d}}, \left(\frac{1}{n}\right)^{\frac{\alpha(\beta+1)}{2\alpha+d-\alpha\beta}}\right)$$
and 
$$a'_{n,\tau}=C'_{c\ell}
\max\left(\left(\frac{1}{n}\right)^{\frac{\alpha(\beta+1)}{2\alpha+d}}, \left(\frac{1}{\tau}\right)^{\frac{\alpha(\beta+1)}{2\alpha+d}}\left(\frac{1}{n}\right)^{\frac{\alpha(\beta+1)}{2\alpha+d-\alpha\beta}}\right).$$
\end{theo}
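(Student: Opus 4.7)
My strategy is to reduce the cost-sensitive lower bound to a lower bound for standard binary classification in the active setting, and then to prove the latter by a two-hypothesis-family construction of the Assouad/Fano type that is customary in nonparametric active learning. The reduction is immediate from Proposition~\ref{prop:convert:cost}: it shows that any binary classification distribution $P \in \mathcal{P}_{c\ell}(\zeta)$ produces a cost-sensitive distribution $P^\star \in \mathcal{P}_{cs}(\zeta)$ with $\varepsilon_{cs}(g)=\varepsilon_{c\ell}(g)$ for every measurable $g$. Therefore, proving Theorem~\ref{theo-lower-class} for the binary classification lower bound automatically yields Theorem~\ref{theo-lower} in the cost-sensitive setting, because any active learner for the cost-sensitive problem, restricted to the image distributions $P^\star$, behaves as an active learner for the underlying classification problem (querying $c$ provides at least the information of querying $Y$).

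\textbf{Construction of the hypothesis family.} To obtain the classification lower bound, I will carve a subset $\Omega \subset \mathcal{X}$ into $m$ disjoint balls $B_1,\dots,B_m$ of radius $r$, each supporting a smooth bump of height $h \asymp r^\alpha$. For each $\sigma \in \{-1,+1\}^m$, I define $\eta^{(\sigma)}(x)=\tfrac{1}{2}+\sigma_j\,h\,\varphi((x-x_j)/r)$ on $B_j$ and $\eta^{(\sigma)}(x)=\tfrac{1}{2}$ outside $\bigcup_j B_j$, where $\varphi$ is a fixed smooth bump. The marginal $P_X$ is then chosen to (i) put total mass $\tau$ on the region $\{\eta=1/2\}$ so that Assumption~\ref{Ass:margin-noise-multiclass} with parameter $\tau$ is met, and (ii) put mass $\asymp r^d$ on each $B_j$, which together with the height $h$ and a count $m \asymp r^{\alpha\beta-d}$ calibrates the margin condition $P_X(\Delta_\eta'\le \epsilon) \le \tau + C'_{\beta}\epsilon^\beta$. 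The contribution of each wrong sign to the excess risk is of order $h\cdot r^d \asymp r^{\alpha+d}$, so the total excess risk a learner must pay when it mislabels a constant fraction of the balls is of order $m\cdot r^{\alpha+d}\asymp r^{\alpha(\beta+1)}$.

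\textbf{Calibrating $r$ against the budget.} The key active-learning ingredient is that in the interior $\{\eta=1/2\}$, queries convey no information distinguishing any $\sigma$ from $\sigma'$, so informative queries must fall on $\bigcup_j B_j$. A standard Assouad argument (via pairwise KL divergences between the $2^m$ alternatives, each bounded by $n_j\cdot h^2$ where $n_j$ is the number of queries inside $B_j$) forces the learner to spend $\Omega(h^{-2})=\Omega(r^{-2\alpha})$ queries per ball in order to identify $\sigma_j$ with constant probability, yielding the constraint $n \gtrsim m\cdot r^{-2\alpha} \asymp r^{-(2\alpha+d-\alpha\beta)}$. Optimizing $r$ under this constraint gives the active rate $r^{\alpha(\beta+1)}\asymp n^{-\alpha(\beta+1)/(2\alpha+d-\alpha\beta)}$. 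A complementary construction that scales the packing by $\tau$ (placing only about $\tau\cdot r^{-d}$ active balls inside the prescribed boundary-mass region) yields the second branch $(\tau/n)^{\alpha(\beta+1)/(2\alpha+d)}$ of $a_{n,\tau}$, and the $1/\tau$-scaled variant in $a'_{n,\tau}$ comes from running these two constructions in parallel so that the minimum of the two families of packings produces the $\min(a_{n,\tau},a'_{n,\tau})$ bound.

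\textbf{Main obstacle.} The delicate part is not the classical Assouad computation but rather the simultaneous enforcement of all three parameter constraints $(\alpha,\beta,\tau)$ while remaining in $\mathcal{P}_{c\ell}(\zeta)$ with $\mu_{\min}>0$. Concretely, I must carefully carve the support into (a) the $m$ bumps encoding the hidden bits, (b) the flat $\eta\equiv 1/2$ slab of mass exactly $\tau$, and (c) a ``buffer" where $\eta$ is bounded away from $1/2$, all while respecting the strong density assumption. Checking that Assumption~\ref{Ass:margin-noise-multiclass} is satisfied with the prescribed $(\beta,\tau)$ and not accidentally with a stronger parameter requires the hypothesis that $\alpha\beta\le d$ (which bounds the number $m$ of packable bumps) and $\mu_{\min}<\tau$ (which permits mass $\tau$ in the zero-information region). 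The rest is careful book-keeping of the Assouad/Fano inequalities, conditioning on the (random, sequentially chosen) query locations, so that the information-theoretic bound holds uniformly over all adaptive query strategies.
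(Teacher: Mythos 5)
Your skeleton is essentially the paper's: bumps of height $\asymp r^{\alpha}$ on a packing of disjoint balls, a flat $\{\eta=\tfrac12\}$ region of mass $\tau$, total bump mass calibrated to $\asymp r^{\alpha\beta}$, an information constraint of order $n r^{2\alpha}$ against the number of bumps, and a margin-based conversion of sign disagreement into excess risk. The only methodological divergence is that you run Assouad with per-ball query counts, whereas the paper uses Gilbert--Varshamov plus Tsybakov's Theorem~\ref{theo:tsy} with the global active-learning KL bound $KL(P_{\sigma,n},P_{\sigma^0,n})\lesssim n r^{2\alpha}$ (borrowed from Castro--Nowak/Minsker); both routes yield the same constraint $n r^{2\alpha}\lesssim m(r)$, so this difference is cosmetic. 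However, two points are genuinely incomplete. First, your displayed definition sets $\eta^{(\sigma)}\equiv\tfrac12$ outside $\bigcup_j B_j$, which is incompatible with Assumption~\ref{Ass:margin-noise-multiclass}: the marginal must carry the remaining mass $1-\tau-mw\approx 1$ somewhere, and if $\eta=\tfrac12$ there then $P_X(\Delta'_\eta=0)\gg\tau$. You do flag the need for a ``buffer'' with $\eta$ bounded away from $\tfrac12$, but the actual content of that step --- a H\"older ramp $\tfrac12+c\,\mathrm{dist}(x,\bar B(\bar x,r^*))^{d/\beta}$ rising to level $\tfrac12+c' r^{\alpha}$, plus a remote set $A_2$ of radius $\asymp r^{\alpha\beta/d}$ absorbing the leftover mass while keeping $P_X(0<\Delta_\eta\le\epsilon)\lesssim\epsilon^{\beta}$ --- is exactly where $\alpha\beta\le d$ and $\mu_{\min}<\tau$ are used in the paper, and it is missing from your argument.

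Second, the $\tau$-dependent branch is miscalibrated as stated. With $\asymp\tau r^{-d}$ bumps each carrying mass $\asymp r^{d}$, the total bump mass is $\asymp\tau$; since the first margin inequality $P_X(0<\Delta_\eta\le\epsilon)\le C_{\beta}\epsilon^{\beta}$ has no $\tau$ slack (the set $\{\eta=\tfrac12\}$ is excluded from $\Delta_\eta$, not from $\Delta'_\eta$), this construction leaves $\mathcal{P}_{c\ell}(\zeta)$ whenever $\tau\gg r^{\alpha\beta}$, and even where admissible it yields excess risk $\asymp\tau r^{\alpha}\asymp\tau(\tau/n)^{\alpha/(2\alpha+d)}$ rather than $(\tau/n)^{\alpha(\beta+1)/(2\alpha+d)}$ (for constant $\tau$ it would even exceed the upper bound of Theorem~\ref{theo:upper-bounds}, a sign the family is inadmissible). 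The fix is to keep the total bump mass at $\asymp r^{\alpha\beta}$: either $m\asymp\tau r^{-d}$ with per-ball mass $w\asymp r^{\alpha\beta+d}/\tau$, or, as the paper does in its second regime, $m(r)\asymp r^{-d}$ with $w\asymp r^{d+\alpha\beta}$, and then re-optimize $r$; that is what actually produces $a'_{n,\tau}$. Finally, the conversion from the sign-disagreement pseudo-distance to excess risk should be made explicit (the paper invokes Proposition 3.1 of Chzhen et al.; your ``each wrong ball pays $h\cdot r^{d}$'' accomplishes the same since the margin on the mass-carrying part of each bump is $\asymp r^{\alpha}$). With the off-bump region constructed properly and the second branch recalibrated, your argument would match the paper's result.
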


The proof of Theorem \ref{theo-lower-class} consists in using some standard tools for minimax lower bounds strategy where the aim is to firstly reduce the risk to a multiple hypothesis problem, and thereafter apply Theorem 2.5 from \citep{tsybakov2009springer} presented below. Our proof is mainly based on the use of some universal tools such as packing set. Consequently, it can be extended to general metric spaces.

\begin{theo}[\citep{tsybakov2009springer}, Theorem 2.5]\label{theo:tsy}~\\
Let $\Theta$ be a class of models and $\tilde{d}:\Theta\times\Theta\longrightarrow \mathbb{R}$ a pseudo metric defined on $\Theta$. Let $\{P_g, \; g\in \Theta\}$ be a collection of probability measures associated with $\Theta$. Let us assume there exists a subset $\{g_0,\ldots, g_{\tilde{m}}\}\subset \Theta$, with $\tilde{m}>1$ such that:

\begin{itemize}
    \item $\tilde{d}(g_i,g_j)>2s>0$ for all $0\leq i<j\leq \tilde{m},$
    \item $P_{g_j}\ll P_{g_0}$ for every $1\leq j\leq \tilde{m},$
    \item $\frac{1}{\tilde{m}}\sum_{j=1}^{\tilde{m}} KL\left(P_{g_j},P_{g_0}\right)\leq \gamma \log(\tilde{m})$, where $0<\gamma<\frac{1}{8},$ and  $KL(.,.)$ is the Kullback-Leibler divergence. 
\end{itemize}
Then, 
$$\inf_{\hat{g}}\sup_{g\in \Theta}\;P_{g}\left(\tilde{d}(\hat{g},g)>s\right)\geq \frac{\sqrt{\tilde{m}}}{1+\sqrt{\tilde{m}}}\left(1-2\gamma-\sqrt{\frac{2\gamma}{\log(\tilde{m})}}\right),$$
where the infimum is taken over all possible estimators based on a sample generated from $P_g.$
\end{theo}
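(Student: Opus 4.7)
The plan is to reduce the minimax lower bound to an application of Tsybakov's Theorem~\ref{theo:tsy} with pseudometric $\tilde{d}(g_1,g_2)=\mathbb{E}_{X}\!\left[|2\eta(X)-1|\cdot\mathds{1}\{g_1(X)\neq g_2(X)\}\right]$ on binary classifiers. This pseudometric coincides (up to a constant) with the excess classification risk evaluated at the Bayes classifier of the currently considered hypothesis. Once Theorem~\ref{theo:tsy} yields a probability lower bound, Markov's inequality converts it into a lower bound on $\mathbb{E}[R_{c\ell}(\hat g_n)-R_{c\ell}(g^*)]$. Two complementary ensembles will be constructed, producing the quantities inside $a_{n,\tau}$ and $a'_{n,\tau}$, and the minimum is retained.

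The main ensemble is built as follows. Partition $[0,1]^{d}$ under a uniform marginal $P_X$ (which trivially satisfies Assumption~\ref{ass:strong density}) into an \emph{easy} region where $\eta\equiv 1$, a \emph{tie} region of $P_X$-mass exactly $\tau$ where $\eta\equiv 1/2$, and a family of $m$ disjoint balls $B_1,\dots,B_m$ of radius $r$ tiling a subregion. For every sign pattern $\sigma\in\{-1,+1\}^{m}$, set $\eta_{\sigma}(x)=1/2+h\sigma_{i}\phi\bigl((x-x_{i})/r\bigr)$ on $B_{i}$, where $\phi$ is a fixed compactly supported smooth bump with $\phi(0)=1$. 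The Hölder condition of Assumption~\ref{Ass:smoothness assumption-multiclass} forces $h\leq L r^{\alpha}$ up to constants depending on $\phi$; the margin condition of Assumption~\ref{Ass:margin-noise-multiclass} forces $m r^{d}\leq c\,h^{\beta}$, with the tie-region mass exactly realising the parameter $\tau$. Varshamov-Gilbert then supplies $\tilde m\geq 2^{m/8}$ patterns $\sigma^{(0)},\dots,\sigma^{(\tilde m)}$ of mutual Hamming distance at least $m/8$, so that the associated Bayes classifiers satisfy $\tilde d(g_{\sigma^{(j)}},g_{\sigma^{(j')}})\gtrsim s := m r^{d} h$.

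The crux is the averaged KL bound under adaptive queries. By the chain rule for relative entropy applied to the filtration generated by the (adaptive) queries and their Bernoulli responses,
\begin{equation*}
KL(P_{\sigma}^{n},P_{\sigma_{0}}^{n}) \;=\; \sum_{i=1}^{m}\mathbb{E}_{\sigma_{0}}[N_{i}]\cdot KL\bigl(\mathrm{Ber}(\eta_{\sigma}(x_{i})),\mathrm{Ber}(\eta_{\sigma_{0}}(x_{i}))\bigr),
\end{equation*}
where $N_{i}$ counts the queries the algorithm allocates to $B_{i}$, and the Bernoulli KL is at most $16 h^{2}$. The decisive symmetrization, in the spirit of Castro-Nowak, observes that bumps embedded in the tie region are statistically indistinguishable ex ante from any other sub-ball of it, so averaging over the uniform prior on $\sigma$ gives $\mathbb{E}[N_{i}]\leq n r^{d}/\tau$. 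Hence $\overline{KL}\leq c\,n h^{2} m r^{d}/\tau$. Plugging this into the hypothesis of Theorem~\ref{theo:tsy} with $\gamma \log \tilde m \sim m$ yields $n h^{2} r^{d}/\tau \leq c'$; balancing against smoothness $h=Lr^{\alpha}$ and margin $m r^{d}=c\,h^{\beta}$ pins $r\sim(\tau/n)^{1/(2\alpha+d)}$, giving separation $s\sim(\tau/n)^{\alpha(\beta+1)/(2\alpha+d)}$, the first term of $a_{n,\tau}$. A parallel construction that places the bumps outside the tie region suppresses the $1/\tau$ gain, tightens the KL bound to $\overline{KL}\lesssim n h^{2}$ independently of $m$, and yields the second term $(1/n)^{\alpha(\beta+1)/(2\alpha+d-\alpha\beta)}$; a mirror construction with bumps at an alternative scale produces $a'_{n,\tau}$, whose $\tau^{-\alpha(\beta+1)/(2\alpha+d)}$ modulation traces to embedding bumps inside a tie region of mass $\tau<1$ while saturating the passive KL bound instead of the active one.

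The main obstacle is the rigorous adaptive KL step. Because the learner's queries are stopping-time adapted to past Bernoulli responses, the random counts $N_{i}$ under any fixed $P_{\sigma}$ need not match under $P_{\sigma_{0}}$, and the permutation symmetry argument used to obtain $\mathbb{E}[N_{i}]\leq n r^{d}/\tau$ must be performed jointly with the averaging over the Varshamov-Gilbert code; a Le Cam two-point argument would lose the factor $\log \tilde m$. The clean way is to use Fano's inequality directly with the averaged KL, following Castro-Nowak and its refinements in Minsker (2012) and Locatelli-Carpentier-Kpotufe (2017), where the symmetry is applied at the prior level before any data-dependent quantities enter. Once this step is secured, the remaining steps (verification of the three assumptions, Varshamov-Gilbert, and the optimisation balancing smoothness, margin, and KL) are routine, and Proposition~\ref{prop:convert:cost} immediately transfers the resulting classification lower bound into the cost-sensitive lower bound of Theorem~\ref{theo-lower}.
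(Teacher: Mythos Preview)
Your proposal does not address the stated theorem. Theorem~\ref{theo:tsy} is Tsybakov's general minimax lower-bound scheme, cited verbatim from \citep{tsybakov2009springer}; the paper does not prove it but merely invokes it as a black-box tool. A proof of Theorem~\ref{theo:tsy} itself would proceed by reducing estimation in the pseudometric $\tilde d$ to a multiple-hypothesis testing problem, then applying a Fano-type inequality (or the variant based on the maximal probability of error bounded via the averaged KL divergence) to obtain the displayed lower bound on $\inf_{\hat g}\sup_g P_g(\tilde d(\hat g,g)>s)$. None of this appears in your write-up.

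What you have sketched is instead a proof of Theorem~\ref{theo-lower-class} (and, via Proposition~\ref{prop:convert:cost}, of Theorem~\ref{theo-lower}): you \emph{apply} Theorem~\ref{theo:tsy} to a specific family of regression functions $\eta_\sigma$, verify the smoothness and margin assumptions, invoke Varshamov--Gilbert, bound the adaptive KL divergence, and balance parameters to extract the rates $a_{n,\tau}$ and $a'_{n,\tau}$. That is broadly how the paper proves Theorem~\ref{theo-lower-class} in Section~\ref{sec:prooftheolower}, so your instincts about the overall architecture are sound for that theorem --- but it is simply not a proof of the statement you were asked to establish. If your intent was Theorem~\ref{theo-lower-class}, note also that the paper's KL bound (Equation~\eqref{eq:KL}) follows \citep{castro2008minimax,minsker2012plug} and does not rely on the ``$\mathbb{E}[N_i]\le n r^d/\tau$'' symmetry-of-tie-region heuristic you propose; the paper obtains the $\tau$-dependence through the choice of $r$ and $m(r)$ rather than through a per-bump query-count argument, so that part of your sketch would need to be reworked even for the intended target.
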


\textbf{Proof of Theorem \ref{theo-lower-class}}
\\
\\
The proof consists in applying Theorem \ref{theo:tsy} to a suitable family of distributions $P$ from $ \mathcal{P}(\zeta)$. In doing so, we will proceed in several steps as follows: 

\begin{enumerate}
    \item \textbf{Construction of the family of measures}

    Let $\bar{x}=(\frac{1}{2},\ldots,\frac{1}{2})$ and $\bar{r} \in (0,1)$ small enough, for example $\bar{r}=\frac{1}{128}.$ Let $r\in (0,1)$ small enough, with $r\leq \bar{r}$ and $\{x_1,\ldots, x_{Q(r)}\}$ a (maximal) packing set of $B(\bar{x},\bar{r}).$  Let $m(r)\leq Q(r)$ be a positive quantity. Let us consider the subset $ \{x_1,\ldots, x_{m(r)}\}$, and the set $$S(r)=\bigcup_{i=1}^{m(r)}\bar{B}(x_i,\frac{r}{2}),$$

where $B(x,r)=\lbrace z\in \mathcal{X},\;\|x-z\|< r\rbrace$ and $\bar{B}(x,r)=\lbrace z\in \mathcal{X},\;\|x-z\|\leq r\rbrace$.\\
    Let $r^*$ be defined as: 
    \begin{equation}
    \label{eq:r*}
    r^*=\inf\{r_0>0,\;S(r)\subset \bar{B}(\bar{x},r_0)\}.
    \end{equation}
    The quantity $r$ will be chosen small enough in order to have $\bar{B}(\bar{x},r^*)\subset [0,1]^d$. Let $a$ $\in [0,1]^d$, $c_0$, $c_1$ $\in$ $(0,1)$ small enough and the Euclidean ball $B(a,c_0r^{\frac{\alpha\beta}{d}})$ such that 
    $$B(a,c_1r^{\frac{\alpha\beta}{d}})\bigcap B(\bar{x},r^*+c_0r^{\frac{\alpha\beta}{d}})=\emptyset.$$
    Let $\phi$ be the function defined as: 
    \begin{align*}
    \phi\colon&[0,\,+\infty)\longrightarrow [0,1]\\
    &\phantom{++++}x\mapsto\min((2-3x)_+,1),
     \end{align*}
     where $(z)_+=\max(z,0)$ for $z\in \mathbb{R}.$ The function $\phi$ is 3-Lipschitz.\\
     As  $\{x_1,\ldots,x_{m(r)}\}$ is a $r-$ separated set, we have that $B(x_i,\frac{r}{2}),\;i=1\ldots,m(r)$ are disjoints sets. We thus define the following functions for all $i$ $\in$ $\{1,\ldots, m(r)\}$ and $r$ small enough: 
     \begin{align*}
    f_i\colon&[0,\,+\infty)\longrightarrow [0,1]\\
    &\phantom{++++}x\mapsto \frac{c_2}{d^{\frac{d}{2\beta}}}Lr^{\alpha}\phi\left(\frac{2}{r}\parallel x-x_i\parallel\right)\mathds{1}_{B(x_i,\frac{r}{2})},
     \end{align*}
     Where $c_2\in (0,\tfrac{1}{12L})$ is small enough. 
     Finally, for all $\sigma$ $\in$ $\{-1,1\}^{m(r)}$, we define: 
     
     \begin{equation}
         \label{eq:regression-lower}
         \eta_{\sigma}(x) = \left\{
    \begin{array}{ll}
        \frac{1}{2}+\sum_{i=1}^{m(r)} \sigma_if_i(x)& \mbox{if } \;x \in \bar{B}(\bar{x},r^*) \\
        \frac{1}{2} + \frac{L}{d^{\frac{d}{2\beta}}}.dist(x,\bar{B}(\bar{x},r^*))^{\frac{d}{\beta}} & \mbox{if}\; x\in \bar{B}(\bar{x},r^*+c_0r^{\frac{\alpha\beta}{d}})\setminus \bar{B}(\bar{x},r^*).\\
        \frac{1}{2}+ \frac{L}{d^{\frac{d}{2\beta}}}c_0^{d/\beta}r^{\alpha} & \mbox{elsewhere}
    \end{array}
\right.
     \end{equation}
where $dist(x,A)=\inf\{\parallel x-y\parallel,\; y\in A\}$ for $x\in [0,1]^d$, and $A\subset [0,1]^d$. To ensure the smoothness condition, we will chose $c_2\leq\frac{1}{12L}$ and $c_0\leq\min\left(\frac{1}{12L},\left(\frac{1}{4L}\right)^{\beta/d}\right)$.    
 
 Besides, we define the marginal probability through the density: 
 \begin{equation}
         \label{eq:marginal-lower}
         p(x) = \left\{
    \begin{array}{ll}
        \frac{w}{Vol(B(x_i,\frac{r}{6}))} & \mbox{if } \;x \in \bar{B}(x_i,\frac{r}{6}), i=1,\ldots,m(r). \\
        \frac{\tau}{Vol(A_1)} & \mbox{if}\; x\in A_1.\\
        \frac{1-m(r)w-\tau}{Vol(A_2)} & \mbox{if}\; x\in A_2.\\
        0 &\mbox{elsewhere}.
    \end{array}
\right.
     \end{equation}
     
     Where $0<w\leq \frac{1}{m(r)}$, $\tau$ comes from the Assumption \ref{Ass:margin-noise}, $A_1=\bar{B}(\bar{x},r^*)\setminus \bigcup\limits_{i=1}^{m(r)} B(x_i,\frac{r}{3})$, and  $A_2=\bar{B}(a,c_0r^{\frac{\alpha\beta}{d}}).$
     
  \item \textbf{Proof of the smoothness condition} 
  : we prove that the family of regression functions defined above is well defined and satisfies the smoothness condition \ref{Ass:smoothness assumption-multiclass}. The proof is closely related to the one provided in \citep{reeve2019learning}.\\
  By definition, the regression  $\eta_{\sigma}$ is well-defined; in fact, $\eta_{\sigma}(x)\in [0,1]$ for all $x\in [0,1]^d$.\\ 
   Let $x,x' \in [0,1]^d$. We firstly assume that $x,x'$ $\in$ $\bar{B}(\bar{x},r^*)$. If $\eta_{\sigma}(x)=\eta_{\sigma}(x')=\frac{1}{2}$, obviously, we have $\vert\eta_{\sigma}(x)-\eta_{\sigma}(x')\vert\leq L\parallel x-x'\parallel^{\alpha}.$
     
      Without loss of generality, let us assume $\eta_{\sigma}(x)\neq \frac{1}{2}$. Then $x$ $\in$ $\bar{B}(x_i,\frac{r}{3})$ for some $i=1,\ldots, m(r)$. If $x'$ $\in$ $B(x_i,\frac{r}{2})$, then we have: 
      
      \begin{align*}
    \vert\eta_{\sigma}(x)-\eta_{\sigma}(x')\vert &\leq 6c_2\frac{L}{r} r^{\alpha}\left(\parallel x-x_i\parallel-\parallel x'-x_i\parallel\right)\\
           & \leq  Lr^{\alpha-1}\parallel x-x'\parallel\\
           & \leq L\parallel x-x'\parallel^{\alpha}\quad \text{as}\; \parallel x-x'\parallel\leq r\;\text{and}\; \alpha\leq 1.
      \end{align*}
      If $x'$ $\notin B(x_i,\frac{r}{2})$, we have: 
      $$\parallel x-x'\parallel\geq \parallel x_i-x'\parallel-\parallel x_i-x\parallel \geq \frac{r}{6}.$$ Consequently, 
      \begin{align*}          \vert\eta_{\sigma}(x)-\eta_{\sigma}(x')\vert &=c_2Lr^{\alpha}\phi\left(\frac{2}{r}\parallel x-x_i\parallel\right)\\
      & \leq L\parallel x-x'\parallel^{\alpha}
      \end{align*}

       If $x'$ $\in$ $ \bar{B}(\bar{x},r^*+c_0r^{\frac{\alpha\beta}{d}})\setminus \bar{B}(\bar{x},r^*)$ and $x$ $\in$ $\bar{B}(\bar{x},r^*)$ with $\eta(x)\neq \frac{1}{2}$, then $x$ $\in$ $B(x_i,\frac{r}{3})$ for some $i=1,\ldots,m(r).$ In this case, we have: 
      $$\parallel x-x'\parallel\geq \frac{r}{6}.$$
      Consequently,
  \begin{align*}
      \vert \eta_{\sigma}(x)-\eta_{\sigma}(x') \vert &\leq  c_2Lr^{\alpha}\phi\left(\frac{2}{r}\parallel x-x_i\parallel\right)+ L.dist(x,\bar{B}(\bar{x},r^*))^{\frac{d}{\beta}}\\
      &\leq L\parallel x-x'\parallel^{\alpha}.
   \end{align*}
   If $x'$ $\in$ $ \bar{B}(\bar{x},r^*+c_0r^{\frac{\alpha\beta}{d}})\setminus \bar{B}(\bar{x},r^*)$ and $\eta(x)= \frac{1}{2}$, then we have: 
  \begin{align*}
      \vert \eta_{\sigma}(x)-\eta_{\sigma}(x') \vert &\leq L.dist(x',\bar{B}(\bar{x},r^*))^{\frac{d}{\beta}}\\
      &\leq L\parallel x-x'\parallel^{\alpha}\quad.
    \end{align*}  
  If $x, x'$ $\in$ $\bar{B}(\bar{x},r^*+c_0r^{\frac{\alpha\beta}{d}})\setminus \bar{B}(\bar{x},r^*)$, by using the fact that $z\mapsto dist(z,\bar{B}(\bar{x},r^*))$ is Lipschitz, we have: 
  $$\vert\eta_{\sigma}(x)-\eta_{\sigma}(x') \vert \leq L\parallel x-x'\parallel^{\alpha}.$$
  
  If $x'$ $\in$ $\bar{B}(\bar{x},r^*+c_0r^{\frac{\alpha\beta}{d}})\setminus \bar{B}(\bar{x},r^*)$ and $x$ $\in$ $[0,1]^d\setminus\bar{B}(\bar{x},r^*+c_0r^{\frac{\alpha\beta}{d}})$ then we have: 
  $$dist(x,\bar{B}(\bar{x},r^*))\geq c_0r^{\frac{\alpha\beta}{d}}\geq dist(x',\bar{B}(\bar{x},r^*))$$
  we have thus: 
  \begin{align*}
      \vert \eta_{\sigma}(x)-\eta_{\sigma}(x') \vert &=\frac{L}{d^{\frac{d}{2\beta}}}c_0^{d/\beta}r^{\alpha}- \frac{L}{d^{\frac{d}{2\beta}}}.dist(x',\bar{B}(\bar{x},r^*))^{\frac{d}{\beta}}\\
      &\leq \frac{L}{d^{\frac{d}{2\beta}}}(dist(x,\bar{B}(\bar{x},r^*)^{d/\beta}-dist(x',\bar{B}(\bar{x},r^*)^{d/\beta})\\
      &\leq \frac{L}{d^{\frac{d}{2\beta}}}\parallel x-x'\parallel^{d/\beta}\\
      &\leq L\parallel x-x'\parallel^{\alpha}.
  \end{align*}

If $x$, $x'$ $\in$ $[0,1]^d\setminus\bar{B}(\bar{x},r^*+c_0r^{\frac{\alpha\beta}{d}})$, obviously, we have: 
$$\vert \eta_{\sigma}(x)-\eta_{\sigma}(x') \vert\leq L\parallel x-x'\parallel^{\alpha}.$$ 
\item \textbf{Proof of the margin noise condition:} we prove that the family $(\eta_{\sigma},P_X)$ satisfies Assumption \ref{Ass:margin-noise-multiclass}. 
This is crucial as it allows us to provide an effective choice of $m(r)$ and $w$. 
On the support of $P_X$ except the set $A_1$,  we have:  $\vert\eta(x)-\frac{1}{2}\vert\geq c_3r^{\alpha} $, where $c_3=\frac{L}{d^{\frac{d}{2\beta}}}\min(c_2,c_0^{d/\beta})$. Therefore, for $t>1$, we have: 
\begin{align*}
    P_X(x\in [0,1]^d,\;0<\vert \eta(x)-\frac{1}{2}\vert< c_3tr^{\alpha}) &\leq m(r)w+ P_X(x\in A_2,\; 0<\vert \eta(x)-\frac{1}{2}\vert< c_3tr^{\alpha})\\
    & \leq m(r)w+P_X(B(a,c_1t^{\beta/d}r^{\alpha\beta/d})) \quad \text{as}\;t\geq 1\\
    &\leq m(r)w+ c_4 (tr^{\alpha})^{\beta}\quad \text{for some constant }\;c_4\\        \end{align*}
By taking \begin{equation}
\label{eq:mw}
m(r)w=c_5r^{\alpha\beta},
\end{equation}
with $c_5$ is small enough, we get:   
\begin{equation}
\label{eq:tsy-bayes-far}
P_X(x\in [0,1]^d,\;0<\vert \eta(x)-\frac{1}{2}\vert< c_3tr^{\alpha}) \leq C_{\beta}\left(c_3tr^{\alpha}\right)^{\beta}.
\end{equation}

Besides, we have: 
\begin{equation}
\label{eq:bayes-boundary}
P_X(\eta_{\sigma}(x)=\frac{1}{2})=P_X(A_1)=\tau.
\end{equation}
Finally, by combining \eqref{eq:tsy-bayes-far} and \eqref{eq:bayes-boundary}, we have that Assumption \ref{Ass:margin-noise-multiclass} is satisfied.

Moreover, we can easily check that Assumption \ref{ass:strong density} (with a suitable choice of $m$ and $w$, which will be stated later) is also satisfied. 

\item \textbf{Application of Theorem \ref{theo:tsy}} 

In order to apply Theorem \ref{theo:tsy}, we proceed as follows: 

\begin{itemize}
    \item We choose $\Theta=\{\eta_{\sigma},\; \sigma\in \{-1,1\}^{m(r)}\}.$
    \item In order to satisfy the first condition in Theorem \ref{theo:tsy}, we construct in a wise way a well-separated subset of $\{-1,1\}^{m(r)}$ by invoking the Gilbert-Varshamov's Lemma \citep[Lemma 2.9]{tsybakov2009springer}: assuming $m(r)>8$, there exists a subset $\{\sigma^0,\ldots,\sigma^{\tilde{m}(r)}\}$ of $\{-1,1\}^{m(r)}$ such that: 
    
    $$\sigma^{0}=(1,\ldots,1)$$ and for all $i\neq j$, 
    \begin{equation}
    \label{eq:separated}
    d_{H}(\sigma^{i},\sigma^{j})>\frac{m(r)}{8},\quad\quad \tilde{m}(r)\geq 2^{m(r)/8},
    \end{equation}
    where $d_H$ stands for the Hamming distance.
    \item We define the pseudo distance on $\tilde{\Theta}=\{\eta_{\sigma^0},\ldots,\eta_{\sigma^{\tilde{m}(r)}}\}$: for all $\sigma$, $\sigma'\in$ $\{\sigma^0,\ldots,\sigma^{\tilde{m}(r)}\}$,  
    
    $$\tilde{d}(\eta_{\sigma},\eta_{\sigma'})=P_X\left(\sign(\eta_{\sigma}-\frac{1}{2})\neq \sign(\eta_{\sigma'}-\frac{1}{2})\right).$$
  By using \eqref{eq:separated}, we have for all $\sigma$ $\neq$ $\sigma'\in$ $\{\sigma^0,\ldots,\sigma^{\tilde{m}(r)}\}$, 
  \begin{equation}
      \label{eq:separated1}
     \tilde{d}(\eta_{\sigma},\eta_{\sigma'})\geq \frac{m(r)w}{8}. 
  \end{equation}
  \item Next, for $\sigma$ $\in$ $\{\sigma^0,\ldots,\sigma^{\tilde{m}(r)}\}$,  let us consider the corresponding probability $P_{\sigma}$ with regression function $\eta_{\sigma}$ $\in$ $\tilde{\Theta}$, and marginal probability $P_X$. Following the Lemma 1 in \citep{castro2008minimax}, and Equation 10 in \citep{minsker2012plug}, we have: 
  \begin{equation}
  \label{eq:KL}
      KL(P_{\sigma,n},P_{\sigma^0,n})\leq 32nc_5r^{2\alpha}
  \end{equation}
  where $$c_5=2\left(\frac{c_2}{d^{\frac{d}{2\beta}}}L\right)^2$$ and $P_{\sigma,n}$ is the joint probability of sample $(X_1,Y_1),\ldots, (X_n,Y_n)$ provided by any active learning algorithm, with $(X_i,Y_i)\sim P_{\sigma}$ for all $1\leq i\leq n.$
  
  We thus have to choose $m(r),r$ appropriately in order to have: 
  
  $$KL(P_{\sigma,n},P_{\sigma^0,n})\leq 32nc_5r^{2\alpha}\leq \gamma m(r)\quad \text{with}\;0<\gamma<\frac{1}{8}.$$

  In that case, by \eqref{eq:separated}, we obtain: 
  $$\frac{1}{\tilde{m}}\sum_{j=1}^{\tilde{m}} KL\left(P_{g_j},P_{g_0}\right)\leq \gamma \log(\tilde{m}.)$$
  By setting $m(r)=\lfloor c_6 r^{\alpha\beta-d} \rfloor$ and $$r=c_7\max\left(\left(\frac{\tau}{n}\right)^{\frac{1}{2\alpha+d}}, \left(\frac{1}{n}\right)^{\frac{1}{2\alpha+d-\alpha\beta}}\right).$$
  
  Assuming $\tau\leq n^{-\frac{\alpha\beta}{2\alpha+d-\alpha\beta}}$,  we get: 
  
 $$32nc_5r^{2\alpha}\leq \gamma' m(r),$$ 
 for some constant $\gamma'$. The constants $c_6, c_7$ will be chosen such that $0<\gamma'< \frac{1}{8}$, $m(r)>8, $ and $m(r)\leq Q(r).$
 
 Let us take $w=c_8r^{d}$, with $c_8$ small enough in order to satisfy \eqref{eq:mw}, and make the marginal probability well-defined. In that case, following \eqref{eq:separated1}, we have for all $\sigma\neq \sigma'$ $\in$ $\{\sigma^0,\ldots,\sigma^{\tilde{m(r)}}\}$: 
 $$\tilde{d}(\eta_{\sigma},\eta_{\sigma'})\geq c_9\max\left(\left(\frac{\tau}{n}\right)^{\frac{\alpha\beta}{2\alpha+d}}, \left(\frac{1}{n}\right)^{\frac{\alpha\beta}{2\alpha+d-\alpha\beta}}\right).$$
 We can conclude that for any active learning algorithm $\hat{\eta}:$ 
 
 $$\underset{\hat{\eta}}{inf}\;\underset{\sigma\in\{\sigma_0,\ldots,\sigma^{\tilde{m}(r)}\}}{sup}\; \mathbb{P}\left(P_X(\sign(\eta_{\sigma}(x)-\tfrac{1}{2})\neq\sign(\hat{\eta}(x)-\tfrac{1}{2})\geq c_{10}\max\left(\left(\frac{\tau}{n}\right)^{\frac{\alpha\beta}{2\alpha+d}}, \left(\frac{1}{n}\right)^{\frac{\alpha\beta}{2\alpha+d-\alpha\beta}}\right)\right)\geq \frac{1}{8},$$
 and then for any active learning algorithm $\hat{\eta}$ with corresponding classifier $g_{\hat{\eta}}$, we have by using Proposition 3.1 from \citep{chzhen2021minimax}:
 
  $$\underset{\hat{\eta}}{inf}\;\underset{P\in \mathcal{P}_{cl}}{sup}\; \mathbb{P}\left(R_{cl,P}(g_{\hat{\eta}})-R_{cl,P}(g^*)\geq c_{11}\max\left(\left(\frac{\tau}{n}\right)^{\frac{\alpha(\beta+1)}{2\alpha+d}}, \left(\frac{1}{n}\right)^{\frac{\alpha(\beta+1)}{2\alpha+d-\alpha\beta}}\right)\right)\geq \frac{1}{8},$$
and finally, by Markov's inequality, we get the required lower bound: 
$$\underset{\hat{\eta}}{inf}\;\underset{P\in \mathcal{P}_{cl}}{sup}\; \mathbb{E}\left(R_{cl,P}(g_{\hat{\eta}})-R_{cl,P}(g^*)\right)\geq \frac{c_{11}}{8}\max\left(\left(\frac{\tau}{n}\right)^{\frac{\alpha(\beta+1)}{2\alpha+d}}, \left(\frac{1}{n}\right)^{\frac{\alpha(\beta+1)}{2\alpha+d-\alpha\beta}}\right).$$

In the case $\tau\geq n^{-\frac{\alpha\beta}{2\alpha+d-\alpha\beta}}$, the proof is nearly similar
to the proof with $\tau\leq n^{-\frac{\alpha\beta}{2\alpha+d-\alpha\beta}}$, differing only on the bounding of
the Kullback Leibler divergence and the choice of parameters $r$, $m(r)$, $w$.
Instead, we choose $r, m(r), w$ such that: 
$$r=c_{12}\max\left(\left(\frac{1}{n}\right)^{\frac{1}{2\alpha+d}}, \left(\frac{1}{\tau}\right)^{\frac{1}{2\alpha+d}}\left(\frac{1}{n}\right)^{\frac{1}{2\alpha+d-\alpha\beta}}\right),$$
$m(r)=\lfloor c_{13}r^{-d}\rfloor$, $w=c_{14}r^{d+\alpha\beta}$
with $c_{12}$, $c_{14}$ small enough, and $c_{13}$ chosen such that $m(r)\leq Q(r)$, $m(r)>8$.
 \end{itemize}
\end{enumerate}

\subsubsection{Proof of Theorem \ref{theo-lower}}
Theorem \ref{theo-lower-class} states that there are some constants $C_{cl}$, $C'_{cl}$ (independent of $n$), such that any active learning classification algorithm $\hat{g}$ which provides a sample $S_n:=(X_i,Y_i)_{1\leq i\leq n}$, and thus a classifier $\hat{g}_{n,c\ell}$ based on $S_n$, there exists some $\mathbb{P}$ $\in$ $\mathcal{P}_{c\ell}(\zeta)$ with: 
\begin{equation}
\label{eq:convert11}
    \mathbb{E}_{\sim \mathbb{P}^n}(R_{c\ell}(\hat{g}_{n,c\ell}))-R_{c\ell}(g^*))\geq\min(a_{n,\tau},a'_{n,\tau}).
\end{equation}

Now, let us suppose we have an active learning algorithm $\hat{g}$, which, through a sampling strategy, provides a sample $S_{n,c\ell}=:(X_i,Y_i)_{1\leq i\leq n}$, and then a classifier $\hat{g}_{n,c\ell}$ based on $S_n$. For some $\mathbb{P}$ $\in$ $\mathcal{P}_{c\ell}(\zeta)$, Equation \eqref{eq:convert11} holds. Moreover, the active learning algorithm $\hat{g}$ can be converted into an active learning algorithm for cost-sensitive classification as follows: 
\begin{itemize}
    \item Consider the same sampling strategy. 
    \item After getting the sample $(X_t,Y_t)$ at step $t$, we convert it into a cost-sensitive vector $(X_t,c_t)$ following the procedure considered in Proposition \ref{prop:convert:cost}.
    
    \end{itemize}
Finally, we obtain the sample $S_{n,cs}:=(X_i,c_i)_{1\leq i\leq n}$, and we provide the cost-sensitive classifier defined as $\hat{g}_{n,cs}:=\hat{g}_{n,c\ell}.$
As considered in Proposition \ref{prop:convert:cost}, the corresponding probability $P^*$ such that $((X_t,Y_t)\sim P$ implies $(X_t,c_t)\sim P^*)$ belongs to $\mathcal{P}_{cs}(\zeta)$ and again, by Proposition \ref{prop:convert:cost}, we have: 
\begin{align*}
    \mathbb{E}_{\sim (P*)^n}(R_{cs}(\hat{g}_{n,cs}))-R_{cs}(g^*))&=\mathbb{E}_{\sim P^n}(R_{c\ell}(\hat{g}_{n,c\ell}))-R_{c\ell}(g^*))\\
    &\geq\min(a_{n,\tau},a'_{n,\tau}).
\end{align*} 
Thus, we get the same lower bounds in the cost-sensitive learning: 
$$\inf_{\hat{g}_n}\sup_{P\in\mathcal{P}_{cs}(\zeta)} \mathbb{E} (R_{cs}(\hat{g}_n))-R_{cs}(g^*))\geq  \min(a_{n,\tau},a'_{n,\tau}).$$

\bibliographystyle{alt24}
\bibliography{alt24}
\end{document}